\algrenewcommand\alglinenumber[1]{{\textcolor{gray}{\sf\scriptsize#1}}}
\algrenewcommand\algorithmicindent{1.3em}%
\definecolor{bananamania}{rgb}{0.98, 0.91, 0.71}
\definecolor{vspcol}{rgb}{0.78, 0.71, 0.61}
\newcommand\vsp{\textcolor{vspcol}{\,\textvisiblespace\,}}
\newcommand{\std}[1]{\footnotesize\textcolor{gray}{#1}}
\newcommand{\amap}{\bm{\pi}}
\newcommand{\reals}{\mathbb{R}}
\newcommand{\pfrac}[2]{\frac{\partial #1}{\partial #2}}
\newcommand{\simplex}{\triangle}
\newcommand\pp{p}
\newcommand\p{\bm{\pp}}
\newcommand\xx{z}
\newcommand\x{\bm{\xx}}
\newcommand{\eqnref}[1]{Eq.~\ref{eq:#1}}
\newcommand{\HHs}{\mathsf{H}^\textsc{S}}
\newcommand{\HHg}{\mathsf{H}^\textsc{G}}
\newcommand{\HHt}{\mathsf{H}^{\textsc{T}}}
\newcommand{\ones}{\bm{1}}
\newcommand{\EE}{\mathbb{E}}
\newcommand{\Loss}{\mathsf{L}}
\newcommand{\secref}[1]{\S\ref{sec:#1}}
\newcommand{\appref}[1]{Appendix~\ref{sec:#1}}
\newcommand{\DP}[2]{{#1}^\top{#2}}
\newcommand\thresh{\tau}
\DeclareMathOperator*{\argmax}{\mathsf{argmax}}
\DeclareMathOperator*{\argmin}{\mathsf{argmin}}
\DeclareMathOperator*{\diag}{\mathsf{diag}}
\DeclareMathOperator*{\softmax}{\mathsf{softmax}}
\DeclareMathOperator*{\sparsemax}{\mathsf{sparsemax}}
\newcommand*\entmaxtext{entmax\xspace}
\newcommand*\aentmax[1][\alpha]{\mathop{\mathsf{#1}\textnormal{-}\mathsf{\entmaxtext}}}
\let\tanh\relax
\let\log\relax
\let\exp\relax
\let\max\relax
\let\min\relax
\let\lim\relax
\DeclareMathOperator*{\tanh}{\mathsf{tanh}}
\DeclareMathOperator*{\log}{\mathsf{log}}
\DeclareMathOperator*{\exp}{\mathsf{exp}}
\DeclareMathOperator*{\max}{\mathsf{max}}
\DeclareMathOperator*{\min}{\mathsf{min}}
\DeclareMathOperator*{\lim}{\mathsf{lim}}
\newcommand*{\eg}{\textit{e.\hspace{.07em}g.}\@\xspace}
\newcommand*{\ie}{\textit{i.\hspace{.07em}e.}\@\xspace}
\newcommand*{\cf}{\textit{cf.}\@\xspace}
\newcommand{\langp}[2]{\textsc{#1}$\shortrightarrow$\textsc{#2}}
\newcommand{\langpb}[2]{\textsc{#1}$\leftrightarrow$\textsc{#2}}
\newtheorem{definition}{Definition}
\newtheorem{proposition}{Proposition}
\newtheorem{lemma}{Lemma}
\newtheorem{corollary}{Corollary}[lemma]
\newcommand\sts{\emph{seq2seq}}
\def\paragraph{\@startsection{paragraph}{4}{\z@}{1ex plus
   0.5ex minus .1ex}{-1em}{\normalsize\bf}}
\title{Sparse Sequence-to-Sequence Models%
}
\author{Ben Peters\textsuperscript{$\dag$} \quad
        Vlad Niculae\textsuperscript{$\dag$} \and
        Andr\'e F.~T. Martins\textsuperscript{$\dag\ddag$} \\
\textsuperscript{$\dag$}Instituto de Telecomunica\c{c}\~oes, Lisbon, Portugal \\
\textsuperscript{$\ddag$}Unbabel, Lisbon, Portugal\\
\href{mailto:benzurdopeters@gmail.com}{\tt benzurdopeters@gmail.com},\quad
\href{mailto:vlad@vene.ro}{\tt vlad@vene.ro},\quad
\href{mailto:andre.martins@unbabel.com}{\tt andre.martins@unbabel.com}
}
\date{}
\begin{document}
\maketitle
\begin{abstract}
Sequence-to-sequence models are a powerful workhorse of NLP. Most variants employ a softmax transformation in both their attention mechanism and output layer, leading to dense alignments and strictly positive output probabilities. This density is wasteful, making models less interpretable and assigning probability mass to many implausible outputs.
In this paper, we propose \emph{sparse} sequence-to-sequence models,
rooted in a new family of $\alpha$-\entmaxtext transformations, which includes softmax and sparsemax as particular cases, and is sparse for any $\alpha > 1$. We provide fast algorithms to evaluate these transformations and their gradients, which scale well for large vocabulary sizes.
Our models are able to produce sparse alignments and to assign nonzero probability to a short list of plausible outputs, sometimes rendering beam search exact.
Experiments on morphological inflection and machine translation reveal consistent gains over dense models.
\end{abstract}

\section{Introduction}

Attention-based sequence-to-sequence (\sts) models have proven useful for a
variety of NLP applications, including machine translation
\cite{bahdanau2014neural,vaswani2017attention},
speech recognition \cite{chorowski2015attention}, abstractive summarization
\cite{chopra2016abstractive}, and morphological inflection generation
\cite{kann2016med}, among others.  In part, their strength comes from their
flexibility: many tasks can be formulated as transducing a source sequence into
a target sequence of possibly different length.

However, conventional {\sts} models are {\bf dense}: they compute both attention weights and output probabilities with the {\bf softmax function} \citep{bridle1990probabilistic}, which always returns positive values. This results in \emph{dense attention alignments}, in which each source position is attended to at each target position, and in \emph{dense output probabilities}, in which each vocabulary type always has nonzero probability of being generated.
This contrasts with traditional statistical machine translation systems, which are based on sparse, hard alignments, and decode by navigating through a sparse lattice of phrase hypotheses.
Can we transfer such notions of sparsity to modern neural architectures? And if
so, do they improve performance?

In this paper, we provide an affirmative answer to both questions by proposing
{\bf neural sparse {\sts} models} that replace the softmax  transformations
(both in the attention and output) by {\bf sparse transformations}. Our
innovations are rooted in the recently proposed sparsemax transformation
\citep{martins2016softmax} and Fenchel-Young losses \citep{blondel2019learning}.
Concretely, we consider a family of transformations (dubbed {\bf \boldmath $\alpha$-\entmaxtext}),
parametrized by a scalar $\alpha$, based on the Tsallis entropies
\citep{Tsallis1988}. This family includes softmax ($\alpha=1$) and
sparsemax ($\alpha=2$) as particular cases. Crucially,
\entmaxtext transforms are sparse for all $\alpha>1$.

\begin{figure}

\definecolor{hlcolor}{rgb}{0.9, 0.17, 0.31}
\tikzset{greyline/.style={black!30,->}}
\tikzset{tok/.style={align=left}}
\tikzset{highlightline/.style={thick,hlcolor,->}}
\tikzset{perc/.style={hlcolor,inner sep=0, outer sep=0,align=right,anchor=west}}
\centering
\begin{tikzpicture}[node distance=1.5ex,font=\ttfamily]
\node (a1) at (0,0) {d};
\node[tok, right=of a1] (a2) {r};
\node[tok, right=of a2] (a3) {a};
\node[tok, right=of a3] (a4) {w};
\node[tok, right=of a4] (a5) {e};
\node[tok, right=of a5] (a6) {d};
\node[tok, right=of a6] (a7) {</s>};

\node[tok, below=of a5] (b5) {n};
\node[tok, right=of b5] (b6) {</s>};

\node[tok, below=of b5.south west, anchor=north west] (c5) {</s>};

\draw[greyline] (a1) -- (a2);
\draw[greyline] (a2) -- (a3);
\draw[greyline] (a3) -- (a4);
\draw[greyline] (a5) -- (a6);
\draw[greyline] (a6) -- (a7);
\draw[greyline] (b5) -- (b6);

\draw[highlightline] (a4) -- (a5);
\path (a4) edge[highlightline, bend right] (b5.west);
\path (a4) edge[highlightline, bend right] (c5.west);

\node[perc,above=.0cm of a5.north west]
    {\scriptsize 66.4\%};

\node[perc,above=.0cm of b5.north west]
    {\scriptsize 32.2\%};

\node[perc,above=.0cm of c5.north west]
    {\scriptsize 1.4\%};

\end{tikzpicture}
\caption{The full beam search of our best performing morphological inflection model
when generating the past participle of the verb ``draw''. The model gives nonzero probability
to exactly three hypotheses, including the correct form
(``drawn'') and the form that would be correct if ``draw'' were regular (``drawed'').\label{figure:draw}}
\end{figure}
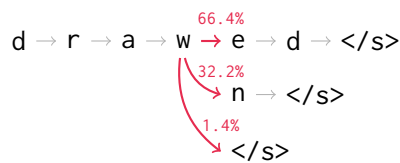

\begin{figure*}
\centering
\includegraphics[scale=1.0]{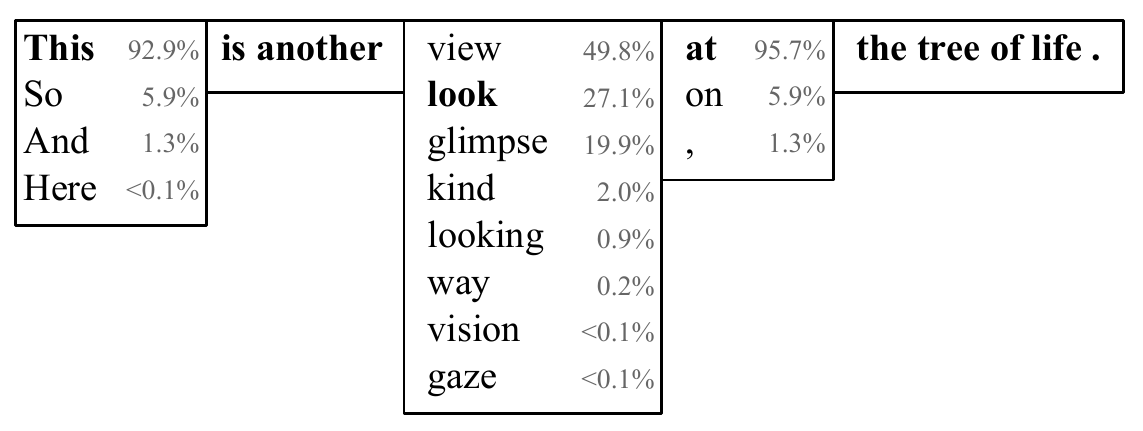}
\caption{Forced decoding using sparsemax attention and
1.5-\entmaxtext output for the German source sentence, ``Dies ist ein weiterer Blick auf
den Baum des Lebens.'' Predictions with nonzero probability are shown at each
time step. All other target types have probability exactly zero.
When consecutive predictions consist of a single word, we combine their borders
to showcase \emph{auto-completion} potential.
The selected gold targets are in boldface.\label{fig:tree}}
\end{figure*}

Our models are able to produce both {\bf sparse attention}, a form of inductive bias that increases focus on relevant source words and makes alignments more interpretable, and
{\bf sparse output probabilities}, which together with auto-regressive models can lead to probability distributions that are nonzero only for a finite subset of all possible strings.
In certain cases, a short list of plausible outputs can be enumerated without ever exhausting the beam
(Figure~\ref{figure:draw}), rendering beam search {\bf exact}. Sparse output
\sts{} models can also be used
for adaptive, sparse next word suggestion (Figure~\ref{fig:tree}).

Overall, our contributions are as follows:
\begin{itemize}
\item We propose an {\bf \entmaxtext sparse output layer}, together with a
natural loss function. In large-vocabulary settings, sparse outputs avoid wasting
probability mass on unlikely outputs, substantially improving accuracy.  For
tasks with little output ambiguity, \entmaxtext losses, coupled with beam
search, can often produce \textbf{exact finite sets} with only one or a few
sequences.
To our knowledge, this is the first study of sparse output probabilities
in {\sts} problems.

\item We construct \textbf{\entmaxtext sparse attention},
improving interpretability at no cost in accuracy.
We show that the \entmaxtext gradient has a simple form
(Proposition~\ref{prop:general_backward}),
revealing an insightful missing link between softmax and sparsemax.
\item We derive a novel exact algorithm for the case of 1.5-\entmaxtext,
achieving processing speed close to softmax on the GPU, even with large
vocabulary sizes. For arbitrary $\alpha$, we investigate a GPU-friendly
approximate algorithm.%
\footnote{Our standalone Pytorch entmax implementation is available at \url{https://github.com/deep-spin/entmax}.} %
\end{itemize}
We experiment on two tasks:
one character-level with little ambiguity ({\bf morphological inflection generation})
and another word-level, with more ambiguity ({\bf neural machine translation}).
The results show clear benefits of our approach, both in terms of accuracy and interpretability.

\section{Background}
\label{sec:background}
The underlying architecture we focus on is an RNN-based {\sts} with
global attention and input-feeding  \citep{luong2015effective}.
We provide a brief description of this architecture, with an emphasis on the
{attention mapping} and the {loss function}.

\paragraph{Notation.} Scalars, vectors, and matrices
are denoted respectively
as $a$, $\bm{a}$, and $\bm{A}$.
We denote the $d$--probability simplex (the set of vectors
representing probability distributions over $d$ choices) by
$\simplex^d \coloneqq \{\p \in \reals^d \colon \p \geq 0, \|\p\|_1 = 1\}$.
We denote the positive part as $[a]_+ \coloneqq \max\{a, 0\}$, and
by $[\bm{a}]_+$ its elementwise application to vectors.
We denote the indicator vector $\bm{e}_y \coloneqq
[0, \dots, 0, \underbrace{1}_{y}, 0, \dots, 0]$.

\paragraph{Encoder.} Given an input sequence of tokens $\bm{x} \coloneqq [x_1, \dots, x_J]$,
the encoder applies an embedding lookup followed by $K$ layered bidirectional LSTMs
\citep{hochreiter1997long}, resulting in encoder states $[ \bm{h}_1, \dots, \bm{h}_J]$.

\paragraph{Decoder.} The decoder generates output tokens $y_1, \ldots, y_T$, one at a time, terminated by a stop symbol. At each time step $t$,
it computes a probability distribution for the next generated word $y_t$, as follows.
Given the current state $\bm{s}_t$ of the decoder LSTM,  an {\bf attention mechanism} \citep{bahdanau2014neural} computes a focused,
fixed-size summary of the encodings $[ \bm{h}_1, \dots, \bm{h}_J]$, using $\bm{s}_t$ as a query vector. This is done by computing token-level scores $z_j \coloneqq \bm{s}_{t}^\top \bm{W}^{(z)}\bm{h}_{j}$,
then taking a weighted average
\begin{equation}\label{eq:attn}
{\bm{c}}_t \coloneqq \sum_{j=1}^{J} \pi_j \bm{h}_j, ~\text{where}~
\bm{\pi} \coloneqq \softmax(\bm{z}).
\end{equation}
The contextual output is the non-linear combination
$\bm{o}_t \coloneqq \tanh(\bm{W}^{(o)} [\bm{s}_t; \bm{c}_t] + \bm{b}^{(o)})$,
yielding the predictive distribution of the next word%
\newcommand{\tightdots}{\makebox[.8em][c]{.\hfil.\hfil.}}
\begin{equation}\label{eq:seqproba}%
p(y_t\!=\!\cdot \mid \bm{x},  y_1,\text{\tightdots}, y_{t-1})%
\coloneqq%
\softmax(\bm{V}\!\bm{o}_t + \bm{b}).
\end{equation}
The output $\bm{o}_t$, together with the embedding of the predicted
$\widehat{y}_{t}$, feed into the decoder LSTM for the next step, in an auto-regressive manner.
The model is trained to maximize the
likelihood of the correct target sentences, or equivalently, to minimize
\begin{equation}\label{eq:seqloss}
\mathcal{L} = \sum_{(x, y) \in \mathcal{D}}\,\sum_{t=1}^{|y|}
\underbrace{\left(-\log
\softmax(\bm{V}\!\bm{o}_t)\right)_{y_t}}_{\Loss_{\softmax}(y_t, \bm{V}\!\bm{o}_t)}.
\end{equation}
A central building block in the architecture is the transformation $\softmax
\colon \reals^d \rightarrow \simplex^d$,
\begin{equation}\label{eq:softmax}
\softmax(\bm{z})_j\!\coloneqq\!\frac{\exp (z_j)}{\sum_i \exp(z_i)},
\end{equation}
which maps a vector of scores $\bm{z}$ into a probability distribution (\ie, a
vector in $\simplex^d$).
As seen above, the $\softmax$ mapping plays \textbf{two crucial roles} in the decoder:
first, in computing normalized attention weights (\eqnref{attn}), second, in computing the predictive
probability distribution (\eqnref{seqproba}).
Since $\exp \gneq 0$, softmax never assigns a probability of zero to any
word, so we may never fully rule out non-important input tokens from attention,
nor unlikely words from the generation vocabulary. While this may be advantageous for
dealing with uncertainty, it may be preferrable to avoid dedicating model
resources to irrelevant words.
In the next section, we present a strategy for {\bf differentiable sparse
probability mappings}. We show that our approach can be used to learn
powerful {\sts} models with sparse outputs and sparse attention
mechanisms.

\section{Sparse Attention and Outputs}
\label{section:sparse-func}

\subsection{The sparsemax mapping and loss}

To pave the way to a more general family of sparse attention and losses,
we point out that softmax (\eqnref{softmax}) is only one of many possible
mappings from $\mathbb{R}^d$ to $\simplex^d$.  \citet{martins2016softmax}
introduce {\bf sparsemax}: an alternative to softmax which tends to yield
\textbf{sparse probability distributions}:
\begin{equation}\label{eq:sparsemax}
\sparsemax(\bm{z}) \coloneqq \argmin_{\p \in \simplex^{d}} \|\p - \x\|^2.
\end{equation}
Since \eqnref{sparsemax}  is a projection onto $\simplex^d$, which tends to
yield sparse solutions, the predictive distribution $\bm{p}^\star \coloneqq
\sparsemax(\bm{z})$ is likely to assign \textbf{exactly zero} probability to
low-scoring choices.  They also propose a corresponding \textbf{loss function}
to replace the negative log likelihood loss $\Loss_{\softmax}$
(\eqnref{seqloss}):
\begin{equation}\label{eq:sparsemaxloss}
\Loss_{\sparsemax}(y, \bm{z})\!\coloneqq\!%
\frac{1}{2}\!\left(\|\bm{e}_y\!-\!\bm{z}\|^2%
\!-\!%
\|\bm{p}^\star\!-\!\bm{z} \|^2\right)\!,\!\!%
\end{equation}
This loss is smooth and convex on $\bm{z}$ and has a {\bf margin}: it is zero if
and only if $\bm{z}_y \ge \bm{z}_{y'} + 1$ for any $y' \ne y$
\citep[Proposition~3]{martins2016softmax}.  Training models with the sparsemax
loss requires its gradient (\cf \appref{fy_losses}): \[
\nabla_{\bm{z}} \Loss_{\sparsemax}(y, \bm{z}) = -\bm{e}_y + \bm{p}^\star. \] For
using the sparsemax \emph{mapping} in an attention mechanism,
\citet{martins2016softmax} show that it is differentiable almost everywhere,
with
\[ \pfrac{\sparsemax(\bm{z})}{\bm{z}} = \diag(\bm{s}) - \frac{1}{\|\bm{s}\|_1}
        \bm{ss}^\top,
\]
where $s_j = 1$ if $p^\star_j > 0$, otherwise $s_j = 0$.

\paragraph{Entropy interpretation.} At first glance, sparsemax appears very
different from softmax, and a strategy for producing other sparse probability
mappings is not obvious.  However, the connection becomes clear when
considering the variational form of softmax \citep{wainwright_2008}:
\begin{equation}
    \softmax(\bm{z}) = \argmax_{\bm{p} \in \simplex^d} \bm{p}^\top\bm{z} +
    \HHs(\bm{p}),
\end{equation}
where $\HHs(\bm{\p}) \coloneqq -\sum_j p_j \log p_j$ is the well-known
Gibbs-Boltzmann-Shannon entropy with base $e$.

Likewise, letting $\HHg(\bm{p}) \coloneqq \frac{1}{2}\sum_j p_j(1 - p_j)$ be the {\bf Gini entropy},
we can rearrange \eqnref{sparsemax} as%
\begin{equation}%
\sparsemax(\bm{z}) = \argmax_{\bm{p} \in \simplex^d} \bm{p}^\top\bm{z} +
\HHg(\bm{p}),
\end{equation}
crystallizing the connection between softmax and sparsemax: they only differ in
the choice of {\bf entropic regularizer}.

\subsection{A new \entmaxtext mapping and loss family}

\begin{figure}[t]
\input{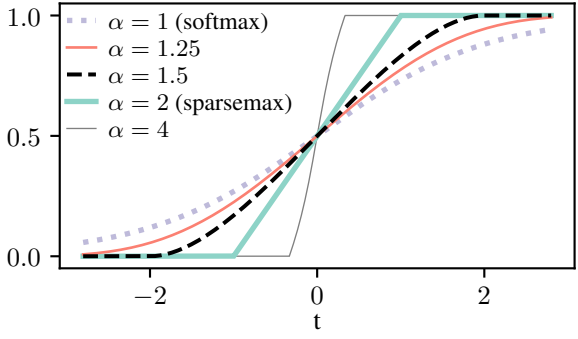}
\caption{\label{fig:entmax_mappings} Illustration of
\entmaxtext in the two-dimensional case $\aentmax([t, 0])_1$.
All mappings except softmax saturate at $t=\pm\nicefrac{1}{\alpha-1}$.
While sparsemax is piecewise linear, mappings with $1 < \alpha < 2$
have smooth corners.}
\end{figure}

The parallel above raises a question: can we find \textbf{interesting interpolations between
softmax and sparsemax?}  We answer affirmatively, by considering a generalization of the
Shannon and Gini entropies proposed by \citet{Tsallis1988}: a family of
entropies parametrized by a scalar $\alpha>1$ which we call {\bf Tsallis $\alpha$-entropies}:
\begin{equation}\label{eq:tsallisdef}
    \HHt_{\alpha}(\bm{p})\!\coloneqq\!
\begin{cases}
\frac{1}{\alpha(\alpha-1)}\sum_j\!\left(p_j - p_j^\alpha\right)\!, &
\!\!\!\alpha \ne 1,\\
\HHs(\bm{p}), &
\!\!\!\alpha = 1.
\end{cases}
\end{equation}

This family is continuous, \ie, $\lim_{\alpha \rightarrow
1}\HHt_{\alpha}(\bm{p}) = \HHs(\bm{p})$ for any $\bm{p} \in \simplex^d$
(\cf \appref{tsallis}). Moreover, $\HHt_{2} \equiv \HHg$. Thus,
Tsallis entropies interpolate between the Shannon
and Gini entropies.  Starting from the Tsallis entropies, we construct a
probability mapping, which we dub {\bf \entmaxtext}:
\begin{equation}\label{eq:define_entmax}
    \aentmax(\bm{z}) \coloneqq
    \argmax_{\p \in \simplex^d} \bm{p}^\top\bm{z} + \HHt_{\alpha}(\bm{p}),
\end{equation}
and,
denoting $\bm{p}^\star \coloneqq \aentmax(\bm{z})$,
a loss function
\begin{equation}\label{eq:define_entmax_loss}
    \Loss_{\alpha}(y, \bm{z}) \coloneqq
    (\bm{p}^\star -\bm{e}_y)^\top\bm{z} + \HHt_{\alpha}(\bm{p}^\star)
\end{equation}
The motivation for this loss function resides in the fact that it is a {\bf
Fenchel-Young loss} \citep{blondel2019learning}, as we briefly explain in \appref{fy_losses}.
Then,
$\aentmax[1]\equiv\softmax$ and  $\aentmax[2]\equiv\sparsemax$.  Similarly,
$\Loss_{1}$ is the negative log likelihood, and $\Loss_2$ is the sparsemax loss.
For all $\alpha > 1$, \entmaxtext tends to produce \textbf{sparse
probability distributions}, yielding a function family continuously
interpolating between softmax and sparsemax, \cf Figure~\ref{fig:entmax_mappings}.
The gradient of the \entmaxtext loss is%
\begin{equation}\label{eq:entmax_loss_grad}%
\nabla_{\bm{z}} \Loss_{\alpha}(y, \bm{z}) = -\bm{e}_y + \bm{p}^\star.
\end{equation}

\noindent {\bf Tsallis \entmaxtext losses} have useful properties including
convexity, differentiability, and a hinge-like separation margin property:
the loss incurred becomes zero when the score of the correct class is separated
by the rest by a margin of $\nicefrac{1}{\alpha - 1}$. When separation
is achieved, $\bm{p}^\star = \bm{e}_y$ \citep{blondel2019learning}.
This allows \entmaxtext~{\sts} models to be \textbf{adaptive to the degree
of uncertainty} present: decoders may make fully confident predictions at ``easy''
time steps, while preserving \textbf{sparse uncertainty} when a few choices are
possible (as exemplified in Figure~\ref{fig:tree}).

\noindent {\bf Tsallis \entmaxtext probability mappings} have not, to our
knowledge, been used in attention mechanisms.  They inherit
the desirable sparsity of sparsemax, while exhibiting smoother, differentiable
curvature, whereas sparsemax is piecewise linear.

\subsection{Computing the \entmaxtext mapping}
Whether we want to use $\aentmax$ as an attention mapping, or
$\Loss_\alpha$ as a loss function, we must be able to efficiently compute
$\p^\star = \aentmax(\x)$, \ie, to solve the maximization in
\eqnref{define_entmax}. For $\alpha=1$, the
closed-form solution is given by \eqnref{softmax}.
For $\alpha>1$, given $\x$, we show that there is a unique threshold $\thresh$
such that
(\appref{thresholded_form}, Lemma~\ref{lemma:tsallis_reduction}):
\begin{equation}
\aentmax(\x)
= [(\alpha - 1){\x} - \thresh \ones]_+^{\nicefrac{1}{\alpha-1}},
\end{equation}
\ie, entries with
score $\xx_j \leq \nicefrac{\thresh}{\alpha-1}$ get zero probability.
For sparsemax ($\alpha=2$), the problem amounts to Euclidean projection
onto $\simplex^d$, for which two types of algorithms are well studied:
{
    \setlist{parsep=3pt,itemsep=1pt,leftmargin=13pt,itemindent=\parindent,labelsep=6pt}
\begin{enumerate}[label=\roman*.]
    \item exact, based on sorting \cite{Held1974,michelot},
    \item iterative, bisection-based \cite{liu_efficient_2009}.
\end{enumerate}
}
The bisection approach searches for the optimal threshold $\tau$
numerically. \citet{blondel2019learning} generalize this approach in
a way applicable to $\aentmax$. The resulting algorithm is
(\cf \appref{thresholded_form} for details):

\begin{algorithm}[H]
\caption{Compute $\aentmax$ by bisection.}
\small\setstretch{1.15}
\begin{algorithmic}[1]
    \State Define $\p(\thresh) \coloneqq [\x - \thresh]_+^{\nicefrac{1}{\alpha - 1}}$,
set $\x \leftarrow (\alpha - 1) \x$
\State $\thresh_{\min} \leftarrow \max(\x) - 1; \thresh_{\max} \leftarrow \max(\x) - d^{1 - \alpha}$
\For{$t \in 1, \dots, T$}

\State  $\thresh \leftarrow (\thresh_{\min} + \thresh_{\max}) / 2$
\State $Z \leftarrow \sum_j \pp_j(\thresh)$
\State \textbf{if} {$Z<1$}~\textbf{then}~$\thresh_{\max}\leftarrow\thresh$~%
\textbf{else}~$\thresh_{\min}\leftarrow\thresh$
\EndFor
\State \Return $\nicefrac{1}{Z}~\p(\thresh)$%
\label{line:return_normalize}
\end{algorithmic}
\label{algo:bisect}
\end{algorithm}
Algorithm~\ref{algo:bisect} works by iteratively narrowing the interval containing the exact solution by exactly half.
Line \ref{line:return_normalize} ensures that approximate solutions are valid
probability distributions, \ie, that $\p^\star\in\simplex^d$.

Although bisection is simple and effective, an exact sorting-based algorithm, like
for sparsemax, has the potential to be faster and more accurate. Moreover, as
pointed out by \citet{condat2016}, when exact solutions are required, it is
possible to construct inputs $\x$ for which bisection requires arbitrarily many
iterations.  To address these issues, we propose a
\textbf{novel, exact algorithm for 1.5-\entmaxtext}, halfway between softmax and
sparsemax.

\begin{algorithm}[H]%
\caption{\label{alg:tsallis_15}Compute $\aentmax[1.5](\x)$ exactly.}
\small\setstretch{1.15}
\begin{algorithmic}[1]
\State Sort $\x$, yielding  $\xx_{[d]} \le \dots \le \xx_{[1]}$;
set $\x \leftarrow \nicefrac{\x}{2}$

\For{$\rho \in 1,\dots,d$}
    \State $M(\rho) \leftarrow \nicefrac{1}{\rho}
    \sum_{j=1}^{\rho}\!\xx_{[j]}$
    \State $\phantom{M}\llap{$S$\,}(\rho)\leftarrow \sum_{j=1}^{\rho}\!\left(\xx_{[j]} - M(\rho)\right)^2$
    \State $\phantom{M}\llap{$\tau$\,}(\rho) \leftarrow M(\rho) -
    \sqrt{\nicefrac{1}{\rho}\left(1 - S(\rho)\right)}$
    \vspace{1ex}
    \If{$\xx_{[\rho+1]} \leq \tau(\rho) \leq \xx_{[\rho]}$}
    \State \Return $\p^\star = \left[\x - \tau\,\bm{1} \right]_+^2$
    \EndIf
\EndFor
\end{algorithmic}
\end{algorithm}

We give a full derivation in \appref{algo}. As written, Algorithm~\ref{alg:tsallis_15}
is $O(d \log d)$ because of the sort; however, in practice, when the solution
$\p^\star$ has no more than $k$ nonzeros, we do not need to fully sort $\x$,
just to find the $k$ largest values. Our experiments in \secref{mt} reveal that
a partial sorting approach can be very efficient and competitive with softmax on
the GPU, even for large $d$. Further speed-ups might be available following the
strategy of \citet{condat2016}, but our simple incremental method is very easy
to implement on the GPU using primitives available in popular libraries
\citep{pytorch}.

Our algorithm resembles the aforementioned sorting-based algorithm for
projecting onto the simplex \citep{michelot}. Both algorithms rely on
the optimality conditions implying an analytically-solvable equation in
$\tau$: for sparsemax ($\alpha=2$), this equation is linear, for $\alpha=1.5$
it is quadratic (Eq.~\ref{eq:quad} in \appref{algo}).
Thus, exact algorithms may not be available for general values of $\alpha$.

\subsection{Gradient of the \entmaxtext mapping}

The following result shows how to compute the backward pass through
$\aentmax$, a requirement when using $\aentmax$ as an attention mechanism.
\begin{proposition}\label{prop:gradient}
Let $\alpha \geq 1$. Assume we have computed $\bm{p}^\star = \aentmax(\bm{z})$,
and define the vector
\[
    s_i = \begin{cases}
        (\pp^\star_i)^{2-\alpha}, & \pp^\star_i > 0, \\
        0, & \text{otherwise.} \\
    \end{cases}
\]
Then,\quad$\displaystyle \pfrac{\aentmax(\bm{z})}{\bm{z}}= \diag(\bm{s}) -
    \frac{1}{\|\bm{s}\|_1}~\bm{ss}^\top.$
\end{proposition}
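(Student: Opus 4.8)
The plan is to differentiate the optimality (KKT) conditions of the concave program \eqnref{define_entmax} implicitly. For $\alpha \geq 1$ the Tsallis entropy is strictly concave on $\simplex^d$, so $\p^\star = \aentmax(\bm{z})$ is the unique maximizer; introducing a single multiplier $\tau$ for the normalization $\ones^\top\p = 1$, the stationarity condition on the support $S \coloneqq \{i : \pp^\star_i > 0\}$ reads $z_i + \partial_{p_i}\HHt_\alpha(\p^\star) = \tau$. Since $\partial_{p_i}\HHt_\alpha(\p) = \tfrac{1}{\alpha(\alpha-1)}(1 - \alpha p_i^{\alpha-1})$, this recovers the thresholded form of Lemma~\ref{lemma:tsallis_reduction}. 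The key structural observation is that $\HHt_\alpha$ is \emph{separable}, so its Hessian is diagonal, with $\partial^2_{p_i}\HHt_\alpha(\p^\star) = -(\pp^\star_i)^{\alpha-2} = -1/s_i$ on $S$; this is precisely where the exponent $2 - \alpha$ in the definition of $\bm{s}$ comes from.

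Next I would note that for generic $\bm{z}$ the active set $S$ is locally constant, so $\aentmax$ is $C^1$ near such $\bm{z}$ and we may differentiate both the stationarity condition and the constraint with respect to each $z_j$. Writing $J_{ij} \coloneqq \partial \pp^\star_i/\partial z_j$ and $t_j \coloneqq \partial \tau/\partial z_j$, differentiating stationarity gives, for $i \in S$, $\delta_{ij} - (1/s_i)\,J_{ij} = t_j$, that is $J_{ij} = s_i(\delta_{ij} - t_j)$, while $J_{ij} = 0$ for $i \notin S$ because zero entries remain zero under small perturbations. Differentiating $\sum_i \pp^\star_i = 1$ yields $\sum_i J_{ij} = 0$.

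Substituting the first relation into the second and summing over $i \in S$ gives $s_j - t_j\sum_{i\in S} s_i = 0$ when $j \in S$ and $-t_j\sum_{i\in S} s_i = 0$ when $j \notin S$; since $s_i = 0$ off the support, both cases collapse to $t_j = s_j/\|\bm{s}\|_1$. Plugging back, $J_{ij} = s_i\delta_{ij} - s_i s_j/\|\bm{s}\|_1$, which in matrix form is exactly $\diag(\bm{s}) - \|\bm{s}\|_1^{-1}\,\bm{s}\bm{s}^\top$, as claimed. As consistency checks, $\alpha = 1$ gives $s_i = \pp^\star_i$ and recovers the familiar softmax Jacobian $\diag(\p^\star) - \p^\star{\p^\star}^\top$ (using $\|\p^\star\|_1 = 1$), while $\alpha = 2$ gives $s_i \in \{0,1\}$ and recovers the sparsemax Jacobian recalled earlier.

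The main obstacle I anticipate is not the algebra, which is routine once separability is exploited, but the rigor at the boundary: the identity holds only almost everywhere, and one must justify that the active set is locally stable so the implicit function theorem applies to the reduced system indexed by $S$, and then dispose of the measure-zero set of $\bm{z}$ at which some coordinate of $\p^\star$ touches zero. There $\aentmax$ is continuous but generally nondifferentiable, and for $1 < \alpha < 2$ the one-sided behavior of $t \mapsto t^{\nicefrac{1}{\alpha-1}}$ near $0$ (where $s_i = (\pp^\star_i)^{2-\alpha} \to 0$) should be checked to confirm the stated Jacobian is the correct one-sided limit.
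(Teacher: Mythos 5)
Your proposal is correct and rests on the same core mechanism as the paper's proof: implicit differentiation of the stationarity-plus-normalization system restricted to the support, solving for $\partial \tau / \partial z_j$, and substituting back to get $\diag(\bm{s}) - \nicefrac{1}{\|\bm{s}\|_1}\,\bm{ss}^\top$. The differences are in packaging and in how the boundary is handled. The paper does not prove Proposition~\ref{prop:gradient} directly: it derives it as a corollary of a more general result (Proposition~\ref{prop:general_backward} in \appref{general_backward}) valid for any strongly convex $\Omega$, not necessarily separable, where your scalar manipulations become a block Gaussian elimination (Schur complement) on the restricted Hessian $\bm{H}$; separability then collapses $\bm{H}$ to a diagonal with entries $g''(\pp^\star_i) = (\pp^\star_i)^{\alpha-2}$, which is where your $s_i = (\pp^\star_i)^{2-\alpha}$ comes from. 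Your direct route buys brevity, since diagonality makes the elimination trivial; the paper's detour buys reusability for non-separable mappings. More substantively, the obstacle you flag at the end---justifying the vanishing of off-support rows and columns---is exactly what the paper isolates into Lemma~\ref{lemma:sparsejac}, and it resolves it differently from your appeal to generic local constancy of the active set: it shows that subtracting \emph{any} $\varepsilon \geq 0$ from an off-support coordinate $z_j$ leaves the solution unchanged (by exhibiting perturbed KKT multipliers $\bm{\nu}^\star + \varepsilon \bm{e}_j$), so the one-sided derivative is zero at \emph{every} point, not just generically; Danskin's theorem and strong convexity of $\Omega$ then give symmetry of the Jacobian (it is the Hessian of $\Omega^*$) and differentiability almost everywhere, so the one-sided limit agrees with the derivative wherever the latter exists and serves as a generalized Jacobian elsewhere. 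Adopting that argument, rather than active-set stability (which genuinely fails at degenerate inputs), is the cleanest way to discharge the measure-zero concern you correctly raise.
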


\begin{proof}
The result follows directly from the more general
Proposition~\ref{prop:general_backward}, which we state and prove in
\appref{general_backward}, noting that
$\left(\frac{t^\alpha - t}{\alpha(\alpha-1)}\right)'' = t^{\alpha-2}$.
\end{proof}

\bigskip

The gradient expression recovers the softmax and sparsemax Jacobians with
$\alpha=1$ and $\alpha=2$, respectively \citep[Eqs.~8 and 12]{martins2016softmax}, thereby
providing another relationship between the two mappings.
Perhaps more interestingly, Proposition~\ref{prop:gradient}
shows why \textbf{the sparsemax Jacobian depends only on the
support} and not on the actual values of $\p^\star$: the sparsemax Jacobian is
equal for $\p^\star = [.99, .01, 0]$ and $\p^\star = [.5, .5, 0]$. This is not
the case for $\aentmax$ with $\alpha \neq 2$, suggesting that the gradients
obtained with other values of $\alpha$ may be more informative.
Finally, we point out that the gradient of \entmaxtext \emph{losses}
involves the \entmaxtext \emph{mapping} (\eqnref{entmax_loss_grad}), and
therefore Proposition~\ref{prop:gradient} also gives the \emph{Hessian} of
the \entmaxtext loss.

\section{Experiments}

The previous section establishes the computational building blocks required to train
models with \entmaxtext sparse attention and loss functions.
We now put them to use for two important NLP tasks, morphological inflection and machine translation.
These two tasks highlight the characteristics of our innovations in different ways.
Morphological inflection is a {\bf character-level} task with mostly monotonic alignments, but the evaluation demands exactness: the predicted sequence must match the gold standard.
On the other hand, machine translation uses a {\bf word-level} vocabulary orders of magnitude larger and
forces a sparse output layer to confront more ambiguity: any sentence has several valid translations and it is not clear beforehand that \entmaxtext will manage this well.

Despite the differences between the tasks, we keep the architecture and
training procedure as similar as possible. We use two layers for encoder and
decoder LSTMs and apply dropout with probability 0.3. We train with Adam \citep{kingma2014adam}, with a
base learning rate of 0.001, halved whenever the loss increases on the
validation set.
We use a batch size of 64.
At test time, we select the model with the best validation
accuracy and decode with a beam size of 5.
We implemented all models with OpenNMT-py \citep{2017opennmt}.\footnote{Our experiment code is at \url{https://github.com/deep-spin/OpenNMT-entmax}.}

In our primary experiments, we use three $\alpha$ values for the attention and loss functions: $\alpha=1$ (softmax), $\alpha=1.5$ (to which our novel Algorithm~\ref{alg:tsallis_15} applies), and $\alpha=2$ (sparsemax). We also investigate the effect of tuning $\alpha$ with increased granularity.

\subsection{Morphological Inflection}
The goal of morphological inflection is to produce an inflected word form (such as ``drawn'') given a lemma (``draw'')
and a set of morphological tags (\{\texttt{verb}, \texttt{past}, \texttt{participle}\}). We use the data from task 1 of the CoNLL--SIGMORPHON 2018 shared task \citep{cotterell2018conll}.
shared task
\paragraph{Training.} We train models under two data settings:
\textit{high} (approximately 10,000 samples per language in 86 languages) and \textit{medium} (approximately 1,000 training samples per language in 102 languages).
We depart from previous work by using
\textbf{multilingual training}: each model is trained on the data from
all languages in its data setting.
This allows parameters to be shared between languages, eliminates the need to train language-specific models, and may provide benefits similar to other forms of data augmentation  \citep{bergmanis2017training}.
Each sample is presented as a pair: the source contains the lemma concatenated
to the morphological tags and a special language identification token \citep{johnson2017google,peters2017massively},
and the target contains the inflected form.
As an example, the source sequence for Figure~\ref{figure:draw} is
\hl{\texttt{english{\vsp}verb{\vsp}participle{\vsp}past{\vsp}d{\vsp}r{\vsp}a{\vsp}w}}.
Although the set of inflectional tags is not sequential, treating it as such is simple to implement and works well in practice \citep{kann2016med}.
All models use embedding and hidden state sizes of 300. We validate at the end of every epoch in the \textit{high} setting and only once every ten epochs in \textit{medium} because of its smaller size.

\begin{table}[t]
\small

\begin{center}
\begin{tabular}{llrrrr}
\toprule
 \multicolumn{2}{c}{$\alpha$}   &  \multicolumn{2}{c}{high} & \multicolumn{2}{c}{medium} \\
output & attention & (avg.) &  (ens.) &  (avg.) &  (ens.)       \\
\midrule
1 & 1 & 93.15 & 94.20 & 82.55 & 85.68 \\
  & 1.5 & 92.32 & 93.50 & 83.20 & 85.63 \\
  & 2 & 90.98 & 92.60 & 83.13 & 85.65 \\
1.5 & 1 & 94.36 & 94.96 & 84.88 & 86.38 \\
  & 1.5 & 94.44 & 95.00 & 84.93 & 86.55 \\
  & 2 & 94.05 & 94.74 & 84.93 & 86.59 \\
2 & 1 & \textbf{94.59} & \textbf{95.10} & 84.95 & 86.41 \\
  & 1.5 & 94.47 & 95.01 & \textbf{85.03} & \textbf{86.61} \\
  & 2 & 94.32 & 94.89 & 84.96 & 86.47 \\
\midrule
\multicolumn{3}{l}{UZH \citeyearpar{makarov2018uzh}}  & 96.00 & & 86.64  \\
\bottomrule
\end{tabular}

\end{center}
\caption{Average per-language accuracy on the test set (CoNLL--SIGMORPHON 2018
task 1) averaged or ensembled over three runs.\label{table:multi-results}}
\end{table}

\paragraph{Accuracy.} Results are shown in Table~\ref{table:multi-results}.
We report the official metric of the shared task, word accuracy averaged across languages.
In addition to the average results of three individual model runs, we use an ensemble of
those models, where we decode by averaging the raw probabilities at each time step.
Our best sparse loss models beat the softmax baseline by nearly a full percentage point with ensembling, and up to two and a half points in the medium setting without ensembling.
The choice of attention has a smaller impact.
In both data settings, our best model on the validation set outperforms all submissions from the 2018 shared
task except for UZH \citep{makarov2018uzh},
which uses a more involved imitation learning approach and larger ensembles.
In contrast, our only departure from standard \sts{} training
is the drop-in replacement of softmax by \entmaxtext.

\paragraph{Sparsity.} Besides their accuracy,
we observed that \entmaxtext models made very sparse predictions: the best configuration in Table~\ref{table:multi-results} concentrates all probability mass into a single predicted sequence in 81\% validation samples in the \textit{high} data setting, and 66\% in the more difficult \textit{medium} setting.
When the model \emph{does} give probability mass to more than one sequence, the
predictions reflect reasonable ambiguity, as shown in Figure~\ref{figure:draw}.
Besides enhancing interpretability, sparsity in the output also has attractive properties for beam
search decoding: when the beam covers all nonzero-probability hypotheses, we
have a \textbf{certificate} of globally optimal decoding, {\bf rendering beam search exact}.
This is the case on 87\% of validation set sequences in the \textit{high} setting, and 79\% in \textit{medium}.
To our knowledge, this is the first instance of a neural {\sts} model that can offer optimality guarantees.

\subsection{Machine Translation}\label{sec:mt}

\begin{table*}[t]

\begin{center}
\small
\begin{tabular}{lr@{~}lr@{~}lr@{~}lr@{~}lr@{~}lr@{~}l}
\toprule
method
& \multicolumn{2}{c}{\langp{de}{en}} & \multicolumn{2}{c}{\langp{en}{de}}
& \multicolumn{2}{c}{\langp{ja}{en}} & \multicolumn{2}{c}{\langp{en}{ja}}
& \multicolumn{2}{c}{\langp{ro}{en}} & \multicolumn{2}{c}{\langp{en}{ro}} \\
\midrule
$\softmax$ & 25.70 &  \std{$\pm$ 0.15}
& 21.86 & \std{$\pm$ 0.09}
& 20.22 & \std{$\pm$ 0.08}
& 25.21 & \std{$\pm$ 0.29}
& 29.12 & \std{$\pm$ 0.18}
& 28.12 & \std{$\pm$ 0.18} \\
$\aentmax[1.5]$
& \textbf{26.17} & \std{$\pm$ 0.13}
& \textbf{22.42} & \std{$\pm$ 0.08}
& \textbf{20.55} & \std{$\pm$ 0.30}
& \textbf{26.00} & \std{$\pm$ 0.31}
& \textbf{30.15} & \std{$\pm$ 0.06}
& \textbf{28.84} & \std{$\pm$ 0.10} \\
$\sparsemax$
& 24.69 & \std{$\pm$ 0.22}
& 20.82 & \std{$\pm$ 0.19}
& 18.54 & \std{$\pm$ 0.11}
& 23.84 & \std{$\pm$ 0.37}
& 29.20 & \std{$\pm$ 0.16}
& 28.03 & \std{$\pm$ 0.16} \\
\bottomrule
\end{tabular}
\end{center}
\caption{Machine translation comparison of
softmax, sparsemax, and the proposed 1.5-\entmaxtext
as both attention mapping and loss function. Reported is tokenized test BLEU averaged across three runs (higher is better).\label{table:mt}}
\end{table*}

We now turn to a highly different \sts{} regime in which the vocabulary size is much larger, there is a great deal of ambiguity, and sequences can generally be translated in several ways.
We train models for three language pairs in both directions:

\begin{itemize}[itemsep=.5ex,leftmargin=2ex]
    \item IWSLT 2017 German $\leftrightarrow$ English
    \citep[\langpb{de}{en},][]{cettolooverview}: training size 206,112.
    \item KFTT Japanese $\leftrightarrow$ English
    \citep[\langpb{ja}{en},][]{neubig11kftt}: training size of 329,882.
    \item WMT 2016 Romanian $\leftrightarrow$ English
    \citep[\langpb{ro}{en},][]{bojar2016findings}: training size
    612,422, diacritics removed \citep[following][]{sennrich2016edinburgh}.
\end{itemize}

\paragraph{Training.} We use byte pair encoding \citep[BPE;][]{sennrich2016neural} to ensure an open vocabulary.
We use separate segmentations with 25k merge operations per language for \langpb{ro}{en} and a joint segmentation with 32k merges for the other language pairs. \langpb{de}{en} is validated once every 5k steps because of its smaller size, while the other sets are validated once every 10k steps. We set the maximum number of training steps at 120k for \langpb{ro}{en} and 100k for other language pairs. We use 500 dimensions for word vectors and hidden states.

\paragraph{Evaluation.} Table~\ref{table:mt} shows BLEU scores \citep{papineni2002bleu} for the three models with $\alpha \in \{1, 1.5, 2\}$, using the same value of $\alpha$ for the attention mechanism and loss function.
We observe that the 1.5-\entmaxtext configuration consistently performs best across all six choices of language pair and direction.
These results support the notion that the optimal function is somewhere between softmax and
sparsemax, which motivates a more fine-grained search for $\alpha$; we explore this next.

\paragraph{Fine-grained impact of {\boldmath $\alpha$}.}
Algorithm~\ref{algo:bisect} allows us to further investigate the marginal effect
of varying the attention $\alpha$ and the loss $\alpha$, while keeping the other
fixed.  We report \langp{de}{en} validation accuracy on a fine-grained $\alpha$
grid in Figure~\ref{fig:alpha_attn}.  On this dataset, moving from softmax
toward sparser \textbf{attention} (left) has a very small positive effect on
accuracy, suggesting that the benefit in interpretability does not hurt
accuracy.  The impact of the \textbf{loss function} $\alpha$ (right) is much
more visible: there is a distinct optimal value  around $\alpha=1.33$, with
performance decreasing for too large values.  Interpolating between softmax and
sparsemax thus inherits the benefits of both, and our novel
Algorithm~\ref{alg:tsallis_15} for $\alpha=1.5$ is confirmed to strike a good
middle ground.  This experiment also confirms that bisection is effective in
practice, despite being inexact. Extrapolating beyond the
sparsemax loss ($\alpha>2$) does not seem to perform well.

\begin{figure*}[t]
\centering
\input{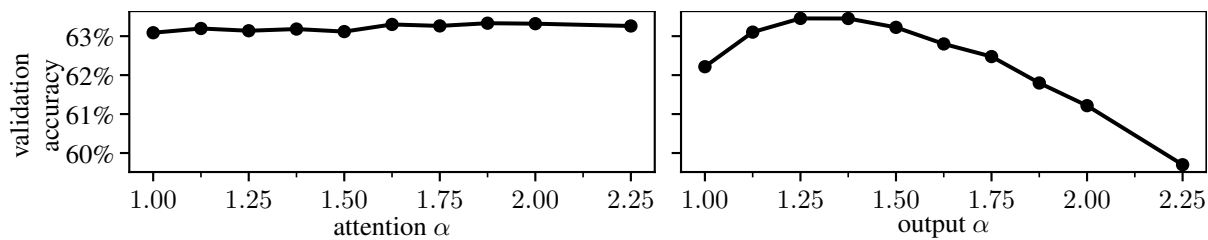}

\caption{Effect of tuning $\alpha$ on \langp{de}{en}, for
attention (left) and for output (right), while keeping the other $\alpha=1.5$.\label{fig:alpha_attn}}
\end{figure*}

\paragraph{Sparsity.} In order to form a clearer idea of how sparse \entmaxtext becomes, we measure the average number of nonzero indices on the \langp{de}{en} validation set and show it in Table~\ref{table:sparsity}.
As expected, 1.5-\entmaxtext is less sparse than sparsemax as both an attention mechanism and output layer.
In the attention mechanism, 1.5-\entmaxtext's increased support size does not come at the cost of much interpretability, as Figure~\ref{fig:attn-plot} demonstrates.
In the output layer, 1.5-\entmaxtext assigns positive probability to only 16.13 target types out of a vocabulary of 17,993, meaning that the supported set of words often has an intuitive interpretation.
Figure~\ref{fig:tree} shows the sparsity of the 1.5-\entmaxtext output layer in practice: the support becomes completely concentrated when generating a phrase like ``the tree of life'', but grows when presenting a list of synonyms (``view'', ``look'', ``glimpse'', and so on).
This has potential practical applications as a \textbf{predictive translation} system \citep{green2014human}, where the model's support set serves as a list of candidate auto-completions at each time step.

\begin{figure}
\centering
\includegraphics{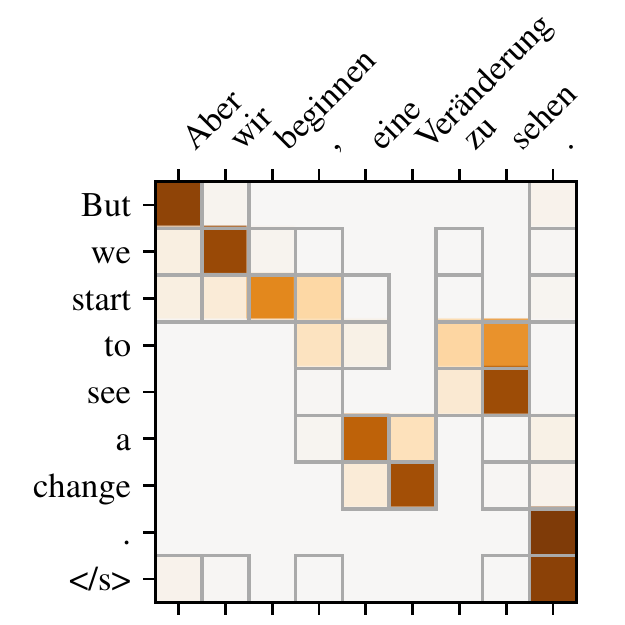}
\caption{Attention weights produced by the \langp{de}{en} {1.5-\entmaxtext} model. Nonzero weights are outlined.}
\label{fig:attn-plot}
\end{figure}

\begin{table}[t]

\begin{center}\small
\begin{tabular}{lrr}
\toprule
method  & \# attended &  \# target words \\
\midrule
$\softmax$ &     24.25 & 17993 \\
$\aentmax[1.5]$ &   5.55    &     16.13 \\
$\sparsemax$ &      3.75 &   7.55  \\
\bottomrule
\end{tabular}
\end{center}
\caption{Average number of nonzeros in the attention and output distributions for the \langp{de}{en} validation set.}
\label{table:sparsity}
\end{table}

\paragraph{Training time.}
Importantly, the benefits of sparsity do \textbf{not} come at a high computational cost. Our proposed
Algorithm~\ref{alg:tsallis_15} for 1.5-\entmaxtext runs
on the GPU at near-softmax speeds (Figure~\ref{fig:valid_curve}).
For other $\alpha$ values, bisection (Algorithm~\ref{algo:bisect}) is slightly
more costly, but practical even for large vocabulary sizes.
On \langp{de}{en}, bisection is capable of processing about 10,500 target words per second
on a single Nvidia GeForce GTX 1080 GPU, compared to 13,000 words per second for 1.5-\entmaxtext with Algorithm~\ref{alg:tsallis_15} and 14,500 words per second with softmax.
On the smaller-vocabulary morphology datasets, Algorithm~\ref{alg:tsallis_15} is nearly as fast as softmax.

\section{Related Work}

\paragraph{Sparse attention.}
Sparsity in the attention and in the output have different, but related, motivations.
Sparse attention can be justified as a form of inductive bias, since for tasks such as machine translation one expects only a few source words to be relevant for each translated word. Dense attention probabilities are particularly harmful for long sequences, as shown by \citet{luong2015effective}, who propose ``local attention'' to mitigate this problem. Combining sparse attention with fertility constraints has been recently proposed by \citet{malaviya2018sparse}.
Hard attention \citep{xu2015show,aharoni2017morphological,wu2018hard}
selects exactly one source token.
Its discrete, non-differentiable nature requires
imitation learning or Monte Carlo policy gradient approximations, which
drastically complicate training. In contrast, \entmaxtext is a differentiable,
easy to use, drop-in softmax replacement.
A recent study by \citet{Jain2019attention} tackles the limitations of attention
probabilities to provide interpretability. They only study dense
attention in classification tasks, where attention is less crucial for the
final predictions. In their conclusions, the authors defer to
future work exploring sparse attention mechanisms and {\sts} models. We believe
our paper can foster interesting investigation in this area.

\begin{figure}[t]%
\centering
\input{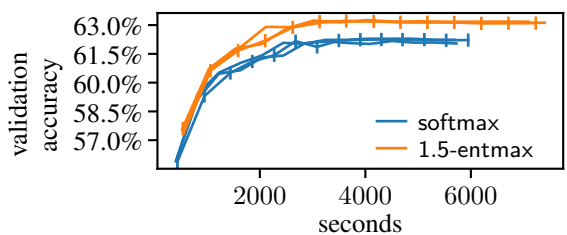}
\caption{Training timing on three \langp{de}{en} runs.
Markers show validation checkpoints for one of the runs.
\label{fig:valid_curve}}
\end{figure}

\paragraph{Losses for \sts{} models.}
Mostly motivated by the challenges of large vocabulary sizes in \sts{},
an important research direction tackles replacing the cross-entropy loss
with other losses or approximations \citep{bengio2008adaptive,morin2005hierarchical,kumar2018mises}.
While differently motivated, some of the above strategies (\eg, hierarchical
prediction) could be combined with our proposed sparse losses.
\citet{sparsemap} use sparsity to predict interpretable sets of structures.
Since auto-regressive \sts{} makes no factorization assumptions, their strategy
cannot be applied without approximations, such as in \citet{edunov2018classical}.

\section{Conclusion and Future Work}

We proposed sparse sequence-to-sequence models and provided fast algorithms to compute their attention and output transformations. Our approach yielded consistent improvements over dense models on morphological inflection and machine translation, while inducing interpretability
in both attention and output distributions. Sparse output layers also
provide exactness when the number of possible hypotheses does not exhaust beam search.

Given the ubiquity of softmax in NLP, \entmaxtext has many potential applications. A natural next step is to apply \entmaxtext to self-attention \citep{vaswani2017attention}.
In a different vein, the strong  morphological inflection results point to usefulness in other tasks where probability is concentrated in a small number of hypotheses, such as speech recognition.

\section*{Acknowledgments}
This work was %
supported by the European Research Council (ERC StG DeepSPIN 758969),
and by the Funda\c{c}\~ao para a Ci\^encia e Tecnologia 
through contracts UID/EEA/50008/2019 and CMUPERI/TIC/0046/2014 (GoLocal). 
We thank
Mathieu Blondel,
Nikolay Bogoychev,
Gon\c{c}alo Correia,
Erick Fonseca,
Pedro Martins,
Tsvetomila Mihaylova,
Miguel Rios,
Marcos Treviso,
and the anonymous reviewers,
for helpful discussion and feedback.

\bibliography{acl2019}
\bibliographystyle{acl_natbib}

\clearpage
\onecolumn
\appendix
\begin{center}
{\huge \textbf{Supplementary Material}}
\end{center}

\setlength{\parindent}{0pt}
\setlength{\parskip}{1.5ex plus 0.5ex minus .2ex}

\def\RR{{\mathbb{R}}}
\def\EE{{\mathbb{E}}}
\def\RRY{\RR^{|\cY|}}
\def\y{\bm{y}}
\def\triangleY{\triangle^{|\cY|}}
\def\sizeY{{|\cY|}}
\def\cC{{\mathcal{C}}}
\def\cD{{\mathcal{D}}}
\def\cX{{\mathcal{X}}}
\def\cY{{\mathcal{Y}}}

\section{Background}

\subsection{Tsallis entropies}\label{sec:tsallis}

Recall the definition of the Tsallis family of entropies in
\eqnref{tsallisdef} for $\alpha \geq 1$,
\begin{equation}
    \HHt_{\alpha}(\bm{p})\coloneqq
\begin{cases}
\frac{1}{\alpha(\alpha-1)}\sum_j\!\!\left(p_j - p_j^\alpha\right), &
\text{if } \alpha \ne 1,\\
\HHs(\bm{p}), &
\text{if } \alpha = 1.
\end{cases}
\end{equation}
This family is \textbf{continuous} in $\alpha$, \ie,
$\lim_{\alpha \rightarrow 1} \HHt_{\alpha}(\p) = \HHt_{1}(\p)$ for any
$\p \in \simplex^d$.
\begin{proof} For simplicity, we rewrite $\HHt_{\alpha}$ in separable form:
\begin{equation}
\HHt_\alpha(\p) = \sum_j h_{\alpha}(\pp_j)
\quad\text{with}\quad
h_{\alpha}(t) \coloneqq \begin{cases}
\frac{t - t^{\alpha}}{\alpha(\alpha-1)}, & \alpha > 1,\\
-t \log t, & \alpha = 1.
\end{cases}
\end{equation}
It suffices to show that
$\lim_{\alpha \rightarrow 1} h_{\alpha}(t) = h_{1}(t)$ for $t \in [0, 1]$.
Let $f(\alpha) \coloneqq t - t^\alpha$, and
$g(\alpha)\coloneqq\alpha(\alpha-1)$. Observe that
$\frac{f(1)}{g(1)}
= \nicefrac{0}{0}$,
so we are in an indeterminate case.
We take the derivatives of $f$ and $g$:%
\[%
f'(\alpha) = 0 - \left(\exp (\log t^\alpha) \right)'
= -\exp (\log t^\alpha) \cdot (\alpha \log t)'
= -t^\alpha \log t,
\quad\text{and}\quad
g'(\alpha) = 2\alpha - 1.
\]
From l'H\^{o}pital's rule,
\[
\lim_{\alpha \rightarrow 1} \frac{f(\alpha)}{g(\alpha)}
=\lim_{\alpha \rightarrow 1} \frac{f'(\alpha)}{g'(\alpha)}
= -t \log t
= h_1(t).
\]
\end{proof}
Note also that, as $\alpha \rightarrow \infty$, the denominator grows unbounded,
so $\HHt_\infty \equiv 0$.

\subsection{Fenchel-Young losses}\label{sec:fy_losses}

In this section, we recall the definitions and properties essential for our
construction of $\aentmax$. The concepts below were formalized by
\citet{blondel2019learning} in more generality; we present below a
less general version, sufficient for our needs.

\begin{definition}[Probabilistic prediction function regularized by $\Omega$]

Let $\Omega \colon \simplex^d \to \RR \cup \{\infty\}$ be a strictly convex
regularization function. We define the prediction function $\amap_{\Omega}$ as
\begin{equation}
\amap_{\Omega}(\x) \in  \argmax_{\p \in \simplex^d}
    \big(\p^\top\x - \Omega(\p)\big)
\label{eq:amap}
\end{equation}
\end{definition}
\begin{definition}[Fenchel-Young loss generated by $\Omega$]\label{def:FY_loss}

Let $\Omega \colon \simplex^d \to \RR \cup \{\infty\}$ be a strictly convex
regularization function. Let $\y \in \simplex$ denote a ground-truth label
(for example, $\y = \bm{e}_y$ if there is a unique correct class $y$).
Denote by $\x \in \RR^d$ the prediction scores produced by some model,
and by $\p^\star \coloneqq \amap_{\Omega}(\x)$ the probabilistic predictions.
The {\bf Fenchel-Young loss} $\Loss_\Omega \colon \RR^d \times \simplex \to \RR_+$
generated by $\Omega$ is
\begin{equation}
L_{\Omega}(\x; \y) \coloneqq
\Omega(\y) - \Omega(\p^\star) + \x^\top(\p^\star - \y).
\label{eq:fy_losses}
\end{equation}
\end{definition}

This justifies our choice of \entmaxtext mapping and loss
(Eqs.~\ref{eq:define_entmax}--\ref{eq:define_entmax_loss}), as
$\amap_{-\HHt_{\alpha}} = \aentmax$ and $\Loss_{-\HHt_{\alpha}} = \Loss_{\alpha}$.

\paragraph{Properties of Fenchel-Young losses.}
\begin{enumerate}[itemsep=3pt]

\item {\bf Non-negativity.} $L_{\Omega}(\x; \y) \ge 0$ for any
    $\x \in \RR^d$ and $\y \in \simplex^d$.

\item {\bf Zero loss.}
$L_\Omega(\x; \y) = 0$ if and only if $\y = \amap_{\Omega}(\x)$, \ie, the
prediction is exactly correct.

\item {\bf Convexity.} $\Loss_{\Omega}$ is convex in $\x$.

\item {\bf Differentiability.}
    $\Loss_{\Omega}$ is differentiable with $\nabla \Loss_{\Omega}(\x; \y) =
    \p^\star - \y$.

\item {\bf Smoothness.} If $\Omega$ is strongly convex, then $L_\Omega$ is
    smooth.

\item {\bf Temperature scaling.} For any constant $t > 0$,
    $\Loss_{t \Omega}(\x; \y) = t \Loss_{\Omega}(\nicefrac{\x}{t}; \y)$.
\end{enumerate}

\paragraph{Characterizing the solution $\p^\star$ of $\amap_{\Omega}(\x)$.}
To shed light on the generic probability mapping in \eqnref{amap},
we derive below the optimality conditions characterizing its solution. The
optimality conditions are essential not only for constructing algorithms for
computing $\p^\star$ (\appref{allalgo}), but also for deriving the Jacobian of the mapping
(\appref{general_backward}). The Lagrangian of the maximization in \eqnref{amap} is
\begin{equation}
    \mathcal{L}(\p, \bm{\nu}, \thresh) = \Omega(\p) - (\x + \bm{\nu})^\top{\p} + \thresh (\bm{1}^\top \p - 1).
\end{equation}
with subgradient
\begin{equation}
    \partial_{\p} \mathcal{L}(\p, \bm{\nu}, \thresh) =
\partial \Omega(\p) - \x - \bm{\nu} + \thresh \mathbf{1}.
\end{equation}
The subgradient KKT conditions are therefore:
\begin{empheq}[left=\empheqlbrace]{align}
    \x + \bm{\nu} - \thresh \mathbf{1} &\in \partial \Omega(\p)
    \label{eq:stationarity}\\
\DP{\p}{\bm{\nu}} &= 0
    \label{eq:complementary_slack}\\
\p &\in \triangle^d
    \label{eq:primal_feas}\\
\bm{\nu} &\ge 0.
    \label{eq:dual_feas}
\end{empheq}

\paragraph{Connection to softmax and sparsemax.} We may now directly see that,
when $\Omega(\p)\coloneqq \sum_j \pp_j \log \pp_j$, \eqnref{stationarity}
becomes $\log \pp_j = \x_j + \nu_j - \thresh - 1$, which can only be satisfied
if $\pp_j > 0$, thus $\bm{\nu}=\bm{0}$. Then, $\pp_j = \nicefrac{\exp(\xx_j)}{Z}$,
where $Z\coloneqq \exp (\thresh + 1)$.
From \eqnref{primal_feas}, $Z$ must be such that $\pp_j$ sums to 1,
yielding the well-known softmax expression. In the case of sparsemax, note that
for any $\p \in \simplex^d$, we have
\[\Omega(\p) = -\HHg(\p) = \nicefrac{1}{2} \sum_j \pp_j  (\pp_j - 1) =
\nicefrac{1}{2} \| \p \|^2 - \frac{1}{2} \underbrace{\sum_j \pp_j}_{=1} =
\nicefrac{1}{2} \| \p \|^2 + \text{const}.\]
Thus,
$\displaystyle
    \argmax_{\p \in \simplex^d} \p^\top\x + \HHg(\p) =
    \argmin_{\p \in \simplex^d} \textcolor{gray}{0.5\Big(} \|\p\|^2 - 2\p^\top\x
\textcolor{gray}{\left(+ \|\x\|^2\right)\Big)} =
    \argmin_{\p \in \simplex^d} \|\p - \x\|^2.
$

\section{Backward pass for generalized sparse attention mappings}%
\label{sec:general_backward}%
\newcommand{\jac}{\pfrac{{\amap_{\Omega}}}{{\x}}}

When a mapping $\amap_{\Omega}$ is used inside the computation graph of a neural
network, the Jacobian of the mapping has the important role of showing how
to propagate error information, necessary when training with gradient methods.
In this section, we derive a new, simple expression for the Jacobian of
generalized sparse $\amap_{\Omega}$. We apply this result to obtain a simple
form for the Jacobian of $\aentmax$ mappings.

The proof is in two steps. First, we prove a lemma that shows that
Jacobians are zero outside of the support of the solution. Then, completing the
result, we characterize the Jacobian at the nonzero indices.

\begin{lemma}[Sparse attention mechanisms have sparse Jacobians]\label{lemma:sparsejac}
Let $\Omega : \reals^d \rightarrow \reals$ be strongly convex.
The attention mapping $\amap_\Omega$
is differentiable almost everywhere, with Jacobian $\displaystyle \jac$
symmetric and satisfying
\[
    \pfrac{(\amap_{\Omega}(\x))_i}{\xx_j} = 0
    \quad\text{if}\quad(\amap_{\Omega}(\x))_i = 0
    \quad\text{or}\quad(\amap_{\Omega}(\x))_j = 0.
\]
\end{lemma}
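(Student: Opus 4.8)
The plan is to exploit the fact that $\amap_\Omega$ is the gradient of a convex function. This single observation yields both the almost-everywhere differentiability and the symmetry of the Jacobian at one stroke, and the sparsity pattern then falls out purely from the nonnegativity of the coordinates of $\amap_\Omega$, with no need to invoke the explicit form of $\Omega$ or to run an implicit-function-theorem argument on the KKT system.

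First I would introduce the conjugate $\Omega^\star(\x) \coloneqq \max_{\p \in \simplex^d}\big(\p^\top \x - \Omega(\p)\big)$, which is finite and convex on $\reals^d$, being a pointwise maximum of affine functions of $\x$ over the compact set $\simplex^d$. Because $\Omega$ is strongly convex, the maximizer is unique, so (by Danskin's theorem) the subdifferential of $\Omega^\star$ is the singleton $\{\amap_\Omega(\x)\}$, whence $\amap_\Omega = \nabla \Omega^\star$ holds at every $\x$. Alexandrov's theorem then guarantees that the convex function $\Omega^\star$ is twice differentiable almost everywhere; at each such point $\amap_\Omega = \nabla \Omega^\star$ is differentiable with Jacobian equal to the Hessian $\nabla^2 \Omega^\star$, which is symmetric. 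This settles both the a.e.\ differentiability and the symmetry of $\jac$.

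For the sparsity pattern, the key remark is that each coordinate $g_i(\x) \coloneqq (\amap_\Omega(\x))_i$ is nonnegative, since $\amap_\Omega(\x) \in \simplex^d$. Fix a point $\x$ of differentiability with $(\amap_\Omega(\x))_i = 0$. Then $g_i \ge 0 = g_i(\x)$, so $\x$ is a global minimizer of $g_i$, forcing $\nabla g_i(\x) = \bm{0}$; that is, $\partial(\amap_\Omega(\x))_i/\partial \xx_j = 0$ for every $j$. This disposes of the case $(\amap_\Omega(\x))_i = 0$. The remaining case $(\amap_\Omega(\x))_j = 0$ follows immediately from the symmetry established above: $\partial(\amap_\Omega(\x))_i/\partial\xx_j = \partial(\amap_\Omega(\x))_j/\partial\xx_i$, and the right-hand side vanishes by applying the minimizer argument to coordinate $j$.

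The main obstacle is the first step: rigorously justifying that the maximizer equals $\nabla \Omega^\star$ and that this gradient map is differentiable a.e.\ with a symmetric derivative. These rest on Danskin's theorem and on Alexandrov's second-order differentiability theorem for convex functions rather than on elementary computation, so the care goes into correctly invoking them. Once they are in hand, the sparsity claim is the easy and conceptually clean part, using only nonnegativity of the simplex coordinates together with the symmetry of the Jacobian.
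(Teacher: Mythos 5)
Your proof is correct, and while its first half matches the paper, the heart of the argument---the sparsity claim---follows a genuinely different route. The first part coincides: both you and the paper identify $\amap_{\Omega} = \nabla\Omega^*$ via Danskin's theorem and obtain symmetry because the Jacobian is a Hessian of the convex conjugate. (One small difference: the paper gets a.e.\ differentiability from strong convexity of $\Omega$, which makes $\nabla\Omega^*$ Lipschitz and hence differentiable a.e.\ by Rademacher; you invoke Alexandrov's theorem instead, which also works, but note that passing from an a.e.\ second-order expansion of $\Omega^*$ to classical a.e.\ differentiability of the map $\nabla\Omega^*$ is itself a nontrivial fact for convex functions, whereas the Lipschitz--Rademacher route sidesteps it.) For the sparsity pattern, the paper runs a KKT perturbation argument: when $(\amap_{\Omega}(\x))_j = 0$, it constructs explicit dual variables $\bm{\nu}^\star + \varepsilon \bm{e}_j$ certifying that $\amap_{\Omega}(\x - \varepsilon\bm{e}_j) = \amap_{\Omega}(\x)$ for every small $\varepsilon \geq 0$, so the one-sided derivative in direction $-\bm{e}_j$ vanishes. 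Your argument---each coordinate of $\amap_{\Omega}$ is a nonnegative function of $\x$, so a point where it equals zero is a global minimizer, forcing its gradient to vanish wherever it exists---is more elementary, requiring neither the KKT system nor any structure of $\Omega$ beyond what was already used for the conjugacy step. What the paper's heavier argument buys is strictly more information: it shows the solution map is locally constant along the ray $\x - \varepsilon\bm{e}_j$, so the zero pattern survives as a one-sided limit even at points where $\amap_{\Omega}$ is \emph{not} differentiable; the paper explicitly remarks that this gives a zero entry of a generalized Jacobian there, which matters when backpropagating through the mapping at kink points. Your argument is silent at such points, but since the lemma as stated only concerns the Jacobian where it exists, your proof fully establishes it.
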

\begin{proof}
Since $\Omega$ is strictly convex, the $\argmax$ in \eqnref{amap} is unique.
Using Danskin's theorem \citep{danskin_theorem}, we may write
\[
    \amap_{\Omega}(\x) = \nabla \max_{\p \in \simplex} \left(\p^\top\x - \Omega(\p)\right)
    = \nabla\Omega^*(\x).
\]
Since $\Omega$ is strongly convex, the gradient of its conjugate $\Omega^*$
is differentiable almost everywhere \citep{Rockafellar1970}. Moreover,
$\jac$ is the Hessian of $\Omega^*$, therefore it is symmetric, proving the
first two claims.
\\[\baselineskip]
Recall the definition of a partial derivative,
\[
    \pfrac{(\amap_{\Omega}(\x))_i}{\xx_j} = \lim_{\varepsilon \rightarrow 0}
    \frac{1}{\varepsilon} \left( \amap_{\Omega}(\x + \varepsilon \bm{e}_j)_i - \amap_{\Omega}(\x)_i \right).
\]
Denote by $\p^\star \coloneqq \amap_{\Omega}(\x)$. We will show that for any $j$ such
that $\pp^\star_j = 0$, and any $\varepsilon \geq 0$,
\[
\amap_{\Omega}(\x - \varepsilon \bm{e}_j) =
\amap_{\Omega}(\x) = \p^\star.
\]
In other words, we consider only one side of the limit, namely
\textbf{subtracting} a small non-negative $\varepsilon$.
A vector $\p^\star$ solves the optimization problem in \eqnref{amap} if and only
if there exists $\bm{\nu}^\star \in \reals^d$ and $\tau^\star \in \reals$
satisfying Eqs.~\ref{eq:stationarity}--\ref{eq:dual_feas}.
Let $\bm{\nu}_\varepsilon \coloneqq \bm{\nu}^\star + \varepsilon\bm{e}_j$.
We verify that $(\p^\star, \bm{\nu}_\varepsilon, \tau^\star)$
satisfies the optimality conditions for
$\amap_{\Omega}(\x - \varepsilon \bm{e}_j)$,
which implies that $\amap_{\Omega}(\x - \varepsilon \bm{e}_j) = \amap_{\Omega}(\x)$.
Since we add a non-negative
quantity to $\bm{\nu}^\star$, which is non-negative to begin with,
$(\bm{\nu}_\varepsilon)_j \geq 0$, and since $\pp^\star_j=0$,
we also satisfy $\pp^\star_j (\bm{\nu}_\varepsilon)_j=0$. Finally,
\[
    \x - \varepsilon \bm{e}_j + \bm{\nu}_\varepsilon - \tau^\star\bm{1}
    =
    \x - \varepsilon \bm{e}_j + \bm{\nu}^\star + \varepsilon \bm{e}_j - \tau^\star\bm{1}
    \in \partial \Omega(\p^\star).
\]
It follows that
$
\lim_{\varepsilon \rightarrow 0_{-}}
\frac{1}{\varepsilon} \left( \amap_{\Omega}(\x + \varepsilon \bm{e}_j)_i -
\amap_{\Omega}(\x)_i \right)
= 0.$ If $\amap_\Omega$ is differentiable at $\x$, this
one-sided limit must agree with the derivative. Otherwise,
the sparse one-sided limit is a generalized Jacobian.
\end{proof}
\begin{proposition}\label{prop:general_backward}
Let $\p^\star \coloneqq \amap_{\Omega}(\x)$,
with strongly convex and differentiable $\Omega$.
Denote the support of $\p^\star$ by $\mathcal{S} = \big\{j \in \{1, \dots, d\}:
\pp_j > 0\big\}$.
If the second derivative
$h_{ij} = \frac{\partial^2 \Omega}{\partial \pp_i \partial \pp_j}(\p^\star)$
exists for any $i,j \in \mathcal{S}$, then
\[
    \jac = \bm{S} - \frac{1}{\|\bm{s}\|_1}~\bm{ss}^\top
    \quad\text{where}\quad
    \bm{S}_{ij} = \begin{cases}
            \bm{H}^{-1}_{ij}, & i, j \in \mathcal{S}, \\
            0, & \text{o.w.} \\
    \end{cases},
    \quad\text{and}~
    \bm{s} = \bm{S}\bm{1}.
\]
In particular, if $\Omega(\p) = \sum_j g(\pp_j)$ with
$g$ twice differentiable on $(0, 1]$, we have
\[
    \jac = \diag{\bm{s}} - \frac{1}{\|\bm{s}\|_1}~\bm{ss}^\top
    \quad\text{where}\quad
    s_i = \begin{cases}
        \big(g''(\pp^\star_i)\big)^{-1}, & i \in \mathcal{S}, \\
        0, & \text{o.w.} \\
    \end{cases}
\]
\end{proposition}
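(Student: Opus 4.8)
The plan is to prove this by \emph{implicit differentiation} of the KKT optimality conditions (Eqs.~\ref{eq:stationarity}--\ref{eq:dual_feas}), restricted to the support $\mathcal{S}$ of $\p^\star$. First I would invoke Lemma~\ref{lemma:sparsejac}, which already does half the work: it guarantees that $\amap_\Omega$ is differentiable almost everywhere with a symmetric Jacobian, and that every row and column indexed outside $\mathcal{S}$ vanishes. Hence it suffices to compute the block $\pfrac{\pp^\star_i}{\xx_j}$ for $i,j \in \mathcal{S}$; the remaining entries are zero, which is exactly what the zero-padded matrices $\bm{S}$ and $\bm{s}\bm{s}^\top$ encode.

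On the support, complementary slackness (\eqnref{complementary_slack}) forces $\nu_j = 0$, so the stationarity condition (\eqnref{stationarity}), written for the subvectors indexed by $\mathcal{S}$ and using that $\Omega$ is differentiable, reduces to the smooth system $\nabla_{\mathcal{S}}\Omega(\p^\star) = \x_{\mathcal{S}} - \thresh\,\ones$ together with the normalization $\ones^\top \p^\star_{\mathcal{S}} = 1$ (\eqnref{primal_feas}). For $\x$ in the full-measure set where $\amap_\Omega$ is differentiable, the strictly positive coordinates stay strictly positive under an infinitesimal perturbation, so the active set is locally constant and this equality-constrained system persists; the implicit function theorem then applies, the relevant derivative of the reduced $\Omega$ being the restricted Hessian $\bm{H}$ (with entries $h_{ij}$), invertible because $\Omega$ is strongly convex.

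Next I would differentiate both relations with respect to $\x_{\mathcal{S}}$. Stationarity gives $\bm{H}\,\mathrm{d}\p^\star_{\mathcal{S}} = \mathrm{d}\x_{\mathcal{S}} - \ones\,\mathrm{d}\thresh$, i.e.\ $\mathrm{d}\p^\star_{\mathcal{S}} = \bm{H}^{-1}(\mathrm{d}\x_{\mathcal{S}} - \ones\,\mathrm{d}\thresh)$, while normalization gives $\ones^\top \mathrm{d}\p^\star_{\mathcal{S}} = 0$. Substituting the former into the latter and solving for $\mathrm{d}\thresh$ yields $\mathrm{d}\thresh = (\ones^\top\bm{H}^{-1}\ones)^{-1}\,\ones^\top\bm{H}^{-1}\,\mathrm{d}\x_{\mathcal{S}}$, and plugging back produces
\[
\pfrac{\p^\star_{\mathcal{S}}}{\x_{\mathcal{S}}}
= \bm{H}^{-1} - \frac{\bm{H}^{-1}\ones\,\ones^\top\bm{H}^{-1}}{\ones^\top\bm{H}^{-1}\ones}.
\]
Writing $\bm{s}_{\mathcal{S}} = \bm{H}^{-1}\ones$ (so that $\bm{s} = \bm{S}\ones$ after zero-padding, and $\ones^\top\bm{H}^{-1} = \bm{s}_{\mathcal{S}}^\top$ by symmetry of $\bm{H}^{-1}$) gives exactly $\bm{S} - \frac{1}{\|\bm{s}\|_1}\bm{s}\bm{s}^\top$, identifying the denominator $\ones^\top\bm{H}^{-1}\ones = \ones^\top\bm{s}$ with $\|\bm{s}\|_1$. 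The separable case is then immediate: when $\Omega(\p) = \sum_j g(\pp_j)$ the restricted Hessian is diagonal, $\bm{H} = \diag\big(g''(\pp^\star_i)\big)_{i\in\mathcal{S}}$, so $\bm{S} = \diag(\bm{s})$ with $s_i = \big(g''(\pp^\star_i)\big)^{-1}$.

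I expect the main obstacle to be the analytic justification of the implicit differentiation, specifically arguing that $\mathcal{S}$ is locally constant so the equality-constrained reduction is valid at the point of differentiation. Lemma~\ref{lemma:sparsejac} supplies the delicate one-sided direction (perturbing a zero coordinate downward leaves $\p^\star$ unchanged); the complementary fact, that strictly positive coordinates stay positive, follows from continuity of $\amap_\Omega$ on the differentiable set, but it must be phrased carefully to exclude the measure-zero configurations where a coordinate is exactly entering or leaving the support. A secondary, purely bookkeeping point is the $\|\bm{s}\|_1$ versus $\ones^\top\bm{s}$ identification: the denominator is positive by positive-definiteness of $\bm{H}^{-1}$, and the two agree when $\bm{s}\ge\bm{0}$, which holds in the separable convex case of interest (where $g''>0$).
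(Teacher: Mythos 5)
Your proposal follows essentially the same route as the paper's own proof: Lemma~\ref{lemma:sparsejac} to zero out the off-support block, then implicit differentiation of the support-restricted stationarity and normalization conditions, solved via the Schur complement to obtain $\bm{H}^{-1} - \frac{1}{\ones^\top\bm{H}^{-1}\ones}\,\bm{H}^{-1}\ones\ones^\top\bm{H}^{-1}$, with the separable case read off from the diagonal Hessian. The only additions --- your explicit justification that the active set is locally constant, and the remark on identifying $\|\bm{s}\|_1$ with $\ones^\top\bm{s}$ --- are points the paper's proof passes over silently, so they refine rather than change the argument.
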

\begin{proof}
Lemma~\ref{lemma:sparsejac}
verifies that
$\pfrac{(\amap_{\Omega})_i}{\xx_j} = 0$
for $i, j \notin \mathcal{S}$.
It remains to find the derivatives w.r.t.\ $i,j \in \mathcal{S}$.
Denote by $\bar{\p}^\star, \bar{\x}$ the restriction of the corresponding vectors to
the indices in the support $\mathcal{S}$.
The optimality conditions on the support are
\begin{eqnarray}
\left\{
\begin{array}{rl}
    \bm{g}(\bar{\p}) + \tau \bm{1} &= \bar{\x} \\
    \bm{1}^\top \bm{\bar{p}} &= 1
\end{array}
\right.
\end{eqnarray}
where $\bm{g}(\bar{\p})\coloneqq\big(\nabla \Omega(\p)\big)\big|_\mathcal{S}$,
so $\pfrac{\bm{g}}{\bar{\p}}(\bar{\p}^\star) = \bm{H}$.
Differentiating w.r.t.\ $\bar{\x}$ at $\p^\star$ yields
\begin{eqnarray}
\left\{
\begin{array}{rl}
    \bm{H} \pfrac{\bar{\p}}{\bar{\x}} +
    \bm{1}\pfrac{\tau}{\bar{\x}}  &= \bm{I} \\
    \bm{1}^\top \pfrac{\bar{\p}}{\bar{\x}} &= 0
\end{array}
\right.
\end{eqnarray}
Since $\Omega$ is strictly convex, $\bm{H}$ is invertible.
From block Gaussian elimination (\ie, the Schur complement),
\[
    \pfrac{\tau}{\bar{\x}} = -\frac{1}{\bm{1}^\top \bm{H}^{-1} \bm{1}}
    \bm{1}^\top \bm{H}^{-1},
\]
which can then be used to solve for $\pfrac{\bar{\p}}{\bar{\x}}$ giving%
\[
    \pfrac{\bar{\p}}{\bar{\x}} =
    \bm{H}^{-1} -
    \frac{1}{\bm{1}^\top \bm{H}^{-1} \bm{1}} \bm{H}^{-1}\bm{11}^\top
    \bm{H}^{-1},
\]
yielding the desired result. When $\Omega$ is separable, $\bm{H}$ is diagonal,
with $H_{ii} = g''(\pp^\star_i)$, yielding the simplified expression which
completes the proof.
\end{proof}

\paragraph{Connection to other differentiable attention results.}
Our result is similar, but simpler than
\citet[Proposition 1]{fusedmax}, especially in the case of separable $\Omega$.
Crucially, our result does not require that the second derivative exist
outside of the support. As such, unlike the cited work, our result
is applicable in the case of $\aentmax$,
where either $g''(t) = t ^ {\alpha - 2}$ or its reciprocal may not exist
at $t=0$.

\section{Algorithms for \entmaxtext}
\label{sec:allalgo}
\subsection{General thresholded form for bisection algorithms.}
\label{sec:thresholded_form}

The following lemma provides a simplified form for the solution
of $\aentmax$.

\begin{lemma}
\label{lemma:tsallis_reduction}%
For any $\x$, there exists a unique $\tau^\star$ such that
\begin{equation}
\aentmax(\x)
= [(\alpha - 1){\x} - \tau^\star \ones]_+^{\nicefrac{1}{\alpha-1}}.
\end{equation}
\end{lemma}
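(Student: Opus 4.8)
The plan is to read the claimed thresholded form directly off the first-order optimality conditions of the defining maximization \eqnref{define_entmax}, and then to pin down the threshold using the normalization constraint. First I would set $\Omega \coloneqq -\HHt_\alpha$ and write it in separable form $\Omega(\p) = \sum_j g(\pp_j)$ with $g(t) = \nicefrac{(t^\alpha - t)}{\alpha(\alpha-1)}$, so that $g'(t) = \nicefrac{t^{\alpha-1}}{\alpha-1} - \nicefrac{1}{\alpha(\alpha-1)}$ and $g''(t) = t^{\alpha-2}$. For $\alpha > 1$ this is strictly positive on $(0,\infty)$, so $\Omega$ is strictly convex; since $\simplex^d$ is compact and the objective continuous, the maximizer $\p^\star$ exists and is unique. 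I would also note that $g$ is $C^1$ up to the boundary, with finite one-sided derivative $g'(0^+) = -\nicefrac{1}{\alpha(\alpha-1)}$, so that $\partial\Omega(\p^\star) = \{\nabla\Omega(\p^\star)\}$ and the KKT system of \appref{fy_losses} (Eqs.~\ref{eq:stationarity}--\ref{eq:dual_feas}) applies cleanly: there exist $\bm\nu \geq 0$ and a scalar $\lambda$ with $\nu_j \pp^\star_j = 0$ and $\xx_j + \nu_j - \lambda = g'(\pp^\star_j)$ for every $j$.

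Next I would split on the support and reparametrize. Setting $\thresh^\star \coloneqq (\alpha-1)\lambda - \nicefrac{1}{\alpha}$, I would multiply the stationarity condition by $(\alpha-1)$. For an index $j$ with $\pp^\star_j > 0$, complementary slackness gives $\nu_j = 0$, and rearranging yields $(\pp^\star_j)^{\alpha-1} = (\alpha-1)\xx_j - \thresh^\star$, which is strictly positive. For an index $j$ with $\pp^\star_j = 0$, the same manipulation of $\xx_j + \nu_j - \lambda = g'(0)$ gives $(\alpha-1)\xx_j - \thresh^\star = -(\alpha-1)\nu_j \leq 0$. Hence the quantity $(\alpha-1)\xx_j - \thresh^\star$ is positive exactly on the support and nonpositive off it, so both cases merge into the single expression $\pp^\star_j = [(\alpha-1)\xx_j - \thresh^\star]_+^{\nicefrac{1}{\alpha-1}}$, which establishes the existence of a valid $\thresh^\star$.

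Finally I would argue uniqueness of $\thresh^\star$. Since $\p^\star \in \simplex^d$, its support is nonempty, and on any supported coordinate the identity $\thresh^\star = (\alpha-1)\xx_j - (\pp^\star_j)^{\alpha-1}$ forces a single value; equivalently, the map $\thresh \mapsto \sum_j [(\alpha-1)\xx_j - \thresh]_+^{\nicefrac{1}{\alpha-1}}$ is continuous and strictly decreasing wherever it is positive, so the constraint $\sum_j \pp^\star_j = 1$ selects exactly one $\thresh^\star$.

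I expect the delicate step to be the off-support bookkeeping: choosing the reparametrization $\thresh^\star = (\alpha-1)\lambda - \nicefrac{1}{\alpha}$ so that the truncation $[\,\cdot\,]_+$ zeroes out \emph{precisely} the coordinates with $\pp^\star_j = 0$ and no others. This alignment relies on $g$ being differentiable at $0$ with a finite derivative $g'(0) = -\nicefrac{1}{\alpha(\alpha-1)}$, a property that holds exactly because $\alpha > 1$; the remaining computations are routine algebra.
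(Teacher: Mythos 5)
Your proof is correct, but it takes a genuinely different route from the paper's. The paper proves Lemma~\ref{lemma:tsallis_reduction} by reduction to cited results on regularized prediction functions: it rewrites $-(\alpha-1)\HHt_\alpha(\p) = \frac{1}{\alpha}\sum_j \pp_j^\alpha + \text{const}$, invokes the constant-invariance and scaling properties of $\amap_\Omega$ from \citet{blondel2019learning} to obtain $\amap_{-\HHt_\alpha}(\x) = \amap_{\Omega}((\alpha-1)\x)$ with $g(t) = \nicefrac{t^\alpha}{\alpha}$, and then cites their Proposition~5, which gives the thresholded form directly from $(g')^{-1}(u) = u^{\nicefrac{1}{\alpha-1}}$; uniqueness is argued essentially as in your closing argument (the nonempty support of $\p^\star$ pins down $\tau^\star$). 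You instead derive everything from the KKT system (the paper's Eqs.~\ref{eq:stationarity}--\ref{eq:dual_feas}), keeping the unscaled regularizer and absorbing the additive constant in $g'$ into the reparametrized multiplier $\thresh^\star = (\alpha-1)\lambda - \nicefrac{1}{\alpha}$; your algebra checks out, and it is essentially the same computation the paper performs later, specialized to $\alpha=1.5$, in the proof of Proposition~\ref{prop:tsallis_15}. What each approach buys: the paper's version is shorter and situates the lemma inside the general Fenchel--Young machinery; yours is self-contained and makes explicit the point the citation hides, namely that the truncation $[\cdot]_+$ zeroes out exactly the non-support coordinates because $g'(0^+)$ is finite when $\alpha>1$ (precisely what fails for softmax at $\alpha=1$, where no truncation occurs). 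One small imprecision that does not affect correctness: at a boundary point the subdifferential $\partial\Omega(\p^\star)$ is \emph{not} a singleton --- for coordinates with $\pp^\star_j = 0$ it contains the half-line $(-\infty, g'(0^+)]$ --- so your parenthetical claim $\partial\Omega(\p^\star) = \{\nabla\Omega(\p^\star)\}$ is too strong; however, the stationarity equalities you actually use, with the slacks $\nu_j \ge 0$ absorbing the difference, are equivalent to the correct subgradient inclusion, so the argument stands.
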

\begin{proof}
We use the regularized prediction functions defined in \appref{fy_losses}.
From both definitions,
\[\aentmax(\x) \equiv \amap_{-\HHt_{\alpha}}(\x).
\]
We first note that for all $\p \in \triangle^d$,
\begin{equation}
-(\alpha-1) \HHt_\alpha(\p) = \frac{1}{\alpha} \sum_{i=1}^d p_i^\alpha + \text{const}.
\end{equation}
From the constant invariance and
scaling properties of $\amap_{\Omega}$
\citep[Proposition~1, items~4--5]{blondel2019learning},
\[\amap_{-\HHt_\alpha}(\x)
= \amap_{\Omega}((\alpha - 1)\x),
\quad\text{with}\quad
\Omega(\p) =
\sum_{j=1}^{d} g(\pp_j),
\quad
g(t) = \frac{t^\alpha}{\alpha}.
\]
Using \citep[Proposition~5]{blondel2019learning}, noting that
$g'(t) = t^{\alpha - 1}$ and $(g')^{-1}(u) =  u^{\nicefrac{1}{\alpha-1}}$,
yields
\begin{equation}\label{eq:rootform}
\amap_{\Omega}(\x) = [\x - \tau^\star \ones]_+^{\nicefrac{1}{\alpha-1}},
\quad\text{and therefore}\quad
\aentmax(\x) = [(\alpha-1)\x - \tau^\star \ones]_+^{\nicefrac{1}{\alpha-1}}.
\end{equation}
Uniqueness of $\tau^\star$ follows from the fact that $\aentmax$ has a unique
solution $\p^\star$, and \eqnref{rootform}
implies a one-to-one mapping between $\p^\star$ and $\tau^\star$,
as long as $\p^\star \in \simplex$.
\end{proof}

\begin{corollary}\label{lemma:tsallis_15_reduction}
For $\alpha=1.5$, Lemma~\ref{lemma:tsallis_reduction} implies existence of a
unique $\tau^\star$ such that
\[
\aentmax[1.5](\x) = [\nicefrac{\x}{2} - \tau^\star \ones]_+^2.
\]
\end{corollary}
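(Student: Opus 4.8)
The plan is to obtain this directly as a specialization of Lemma~\ref{lemma:tsallis_reduction}, since that lemma already delivers, for every $\alpha > 1$ (hence in particular $\alpha = 1.5$), the existence of a unique threshold $\tau^\star$ with
\[
\aentmax(\x) = [(\alpha - 1)\x - \tau^\star \ones]_+^{\nicefrac{1}{\alpha-1}}.
\]
So the only work is to evaluate the two $\alpha$-dependent quantities in this formula at $\alpha = 1.5$. First I would record that the scaling coefficient is $\alpha - 1 = \nicefrac{1}{2}$ and the exponent is $\nicefrac{1}{\alpha-1} = 2$.

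Substituting these two values into the general thresholded expression immediately gives
\[
\aentmax[1.5](\x) = \left[\nicefrac{\x}{2} - \tau^\star \ones\right]_+^2,
\]
which is exactly the claimed form. The existence and uniqueness of $\tau^\star$ are inherited verbatim from Lemma~\ref{lemma:tsallis_reduction} and require no additional argument, since they hold for all $\alpha > 1$.

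In all honesty, there is no substantive obstacle here: the corollary is a one-line specialization of the lemma, included for the reader's convenience because $\alpha = 1.5$ is the case treated by the exact Algorithm~\ref{alg:tsallis_15}. The only thing worth double-checking is the arithmetic of the exponent, namely that $\nicefrac{1}{\alpha-1} = 2$ rather than $\nicefrac{1}{2}$, since it is easy to confuse the squaring exponent with the scaling coefficient; both happen to involve the factor $\nicefrac{1}{2}$, but in different roles.
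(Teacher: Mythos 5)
Your specialization is correct and matches the paper exactly: the corollary is stated there without proof, as an immediate instantiation of Lemma~\ref{lemma:tsallis_reduction} at $\alpha = 1.5$, where $\alpha - 1 = \nicefrac{1}{2}$ gives the scaling and $\nicefrac{1}{\alpha-1} = 2$ gives the exponent. Nothing further is needed.
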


\subsection{An exact algorithm for \entmaxtext with {\boldmath $\alpha=1.5$}:
Derivation of Algorithm~\ref{alg:tsallis_15}.}
\label{sec:algo}

In this section, we derive an exact, sorting-based algorithm for
$\aentmax[1.5]$.  The key observation is that the solution can be characterized
by the size of its support, $\rho^\star = \| \p^\star \|_0$.  Then, we can
simply enumerate all possible values of $\rho \in \{1, \dots, d\}$ until the
solution verifies all optimality conditions.  The challenge,  however, is
expressing the threshold $\thresh$ as a function of the support size $\rho$;
for this, we rely on $\alpha=1.5$.

\begin{proposition}{\label{prop:tsallis_15}Exact computation of $\aentmax[1.5](\x)$}

\noindent Let $\xx_{[d]} \le \dots \le \xx_{[1]}$ denote the sorted coordinates of $\x$,
and, for convenience, let $\xx_{[d+1]} \coloneqq -\infty$.
Define the top-$\rho$ mean, unnormalized variance, and induced threshold for
$\rho \in \{1, \dots, d\}$ as
\begin{equation*}
M_{\x}(\rho)\!\coloneqq\!%
\frac{1}{\rho}\sum_{j=1}^{\rho}\!\xx_{[j]},\quad
S_{\x}(\rho)\!\coloneqq\!\sum_{j=1}^{\rho}\!\left(\xx_{[j]} -
M_{\x}(\rho)\right)^2,
\quad
\tau_{\x}(\rho) \coloneqq
    \begin{cases}
        M_{\x}(\rho) - \sqrt{\frac{1-S_{\x}(\rho)}{\rho}},&S_{\x}(\rho) \leq 1,\\
        +\infty,&S_{\x}(\rho) > 1.
    \end{cases}
\end{equation*}
Then,
\begin{equation}
    \left(\aentmax[1.5](\x)\right)_i = \left[\frac{\xx_i}{2} -
    \tau_{\nicefrac{\x}{2}}(\rho) \right]_+^2,
\end{equation}
for any $\rho$ satisfying $\tau_{\x}(\rho) \in [\xx_{[\rho+1]}, \xx_{[\rho]}]$.
\end{proposition}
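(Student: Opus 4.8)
The plan is to start from the thresholded characterization already established in Corollary~\ref{lemma:tsallis_15_reduction}, namely that $\p^\star = \aentmax[1.5](\x) = [\nicefrac{\x}{2} - \thresh^\star\ones]_+^2$ for a \emph{unique} $\thresh^\star$, and then pin down $\thresh^\star$ in closed form by exploiting the constraint $\p^\star \in \simplex^d$. To match Algorithm~\ref{alg:tsallis_15}, I would work throughout with the rescaled vector $\bm{w} \coloneqq \nicefrac{\x}{2}$, so that $\pp^\star_i = [w_i - \thresh^\star]_+^2$. The first structural step is to observe that since $w_i \mapsto [w_i - \thresh^\star]_+^2$ is nondecreasing, the support $\mathcal{S}$ of $\p^\star$ consists of exactly the $\rho^\star \coloneqq \|\p^\star\|_0$ largest coordinates; that is, in sorted order $\pp^\star_{[j]} > 0$ precisely when $j \le \rho^\star$. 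Establishing this prefix structure lets me drop the elementwise $[\cdot]_+$ and work with a plain square over the top $\rho^\star$ sorted entries.

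Next I would impose normalization $\sum_j \pp^\star_j = 1$, which on the support reads $\sum_{j=1}^{\rho^\star}(w_{[j]} - \thresh^\star)^2 = 1$. Completing the square about the top-$\rho^\star$ mean $M \coloneqq M_{\bm{w}}(\rho^\star)$ and using $\sum_{j=1}^{\rho^\star}(w_{[j]} - M) = 0$ rewrites the left-hand side as $S_{\bm{w}}(\rho^\star) + \rho^\star(M - \thresh^\star)^2$, so that $(M - \thresh^\star)^2 = \nicefrac{(1 - S_{\bm{w}}(\rho^\star))}{\rho^\star}$. This immediately forces $S_{\bm{w}}(\rho^\star) \le 1$ (otherwise there is no real root, which explains the $+\infty$ branch in the definition of $\tau_{\bm{w}}$) and yields the two candidate roots $\thresh^\star = M \pm \sqrt{\nicefrac{(1 - S_{\bm{w}}(\rho^\star))}{\rho^\star}}$. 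The correct sign is dictated by the support: since $w_{[\rho^\star]}$ is positive after thresholding we need $\thresh^\star < w_{[\rho^\star]} \le M$, where the last inequality holds because $M$ averages entries all at least $w_{[\rho^\star]}$; this forces the minus sign and gives exactly $\thresh^\star = \tau_{\bm{w}}(\rho^\star) = \tau_{\nicefrac{\x}{2}}(\rho^\star)$.

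To finish, I would verify that the stated interval test correctly selects the support size. For $\rho = \rho^\star$, the requirement that precisely the top $\rho^\star$ entries be positive is equivalent to $w_{[\rho^\star+1]} \le \thresh^\star \le w_{[\rho^\star]}$, i.e.\ $\tau_{\nicefrac{\x}{2}}(\rho^\star) \in [w_{[\rho^\star+1]}, w_{[\rho^\star]}]$, so the test passes. Conversely, for \emph{any} $\rho$ passing the test, the vector $[\bm{w} - \tau_{\bm{w}}(\rho)\ones]_+^2$ has support equal to the top $\rho$ coordinates (by the interval bounds) and sums to $1$ (by the defining quadratic for $\tau_{\bm{w}}(\rho)$), hence it lies in $\simplex^d$ and has precisely the thresholded form of Lemma~\ref{lemma:tsallis_reduction}; by uniqueness of $\thresh^\star$ it must coincide with $\p^\star$. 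Therefore every admissible $\rho$ returns the correct answer, which is the claim.

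The main obstacle I anticipate is the careful bookkeeping around boundary and degenerate cases rather than the algebra, which is a routine quadratic. Specifically, I must justify the sign choice through the ordering $\thresh^\star \le w_{[\rho^\star]} \le M$, rule out the spurious $S_{\bm{w}}(\rho) > 1$ branch, and handle ties or boundary equalities where more than one $\rho$ may satisfy the closed-interval test; there it is exactly the uniqueness of $\thresh^\star$ from Corollary~\ref{lemma:tsallis_15_reduction} that guarantees all such $\rho$ produce the same $\p^\star$. The only other delicate point is keeping the $\nicefrac{1}{2}$ rescaling consistent between the threshold $\tau_{\nicefrac{\x}{2}}$ and the sorted entries throughout the argument.
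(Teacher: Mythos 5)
Your proposal reaches the same intermediate milestones as the paper's proof --- reduce via Corollary~\ref{lemma:tsallis_15_reduction} to the thresholded form $\p^\star = [\bm{w} - \thresh^\star\ones]_+^2$ with $\bm{w}\coloneqq\nicefrac{\x}{2}$, note that the support is a prefix of the sorted order, impose normalization, complete the square to get the quadratic, and select the root $M_{\bm{w}}(\rho^\star) - \sqrt{(1-S_{\bm{w}}(\rho^\star))/\rho^\star}$. Your sign-selection argument is in fact slightly cleaner: defining $\rho^\star$ as the true support size gives the strict chain $\thresh^\star < w_{[\rho^\star]} \le M_{\bm{w}}(\rho^\star)$, which rules out the larger root immediately, whereas the paper works with a non-strict bound and needs a short contradiction argument. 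The genuine divergence is in the last step, showing that \emph{every} $\rho$ passing the interval test yields the correct output. The paper does this by proving that $\rho \mapsto \tau(\rho)$ is non-decreasing (Lemma~\ref{lemma:tau_nondecr}, a full page of work involving a constrained minimization over a perturbed coordinate) and then sandwiching $\tau(\rho_1)$ and $\tau(\rho_2)$ between sorted coordinates. You instead verify directly that any admissible $\rho$ produces a vector in $\simplex^d$ of thresholded form and appeal to uniqueness; this is shorter, and it would render Lemma~\ref{lemma:tau_nondecr} unnecessary for the proposition.

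However, the uniqueness appeal, as you state it, has a gap. Lemma~\ref{lemma:tsallis_reduction} and its corollary assert that the equation $[\bm{w} - \thresh\ones]_+^2 = \p^\star$ has a unique solution $\thresh^\star$: the fiber over the \emph{particular point} $\p^\star$ is a singleton. What your argument needs is stronger: that $\p^\star$ is the \emph{only} point of $\simplex^d$ expressible as $[\bm{w} - \thresh\ones]_+^2$ for some $\thresh$. Your constructed vector $\bm{q} = [\bm{w} - \tau_{\bm{w}}(\rho)\ones]_+^2$ lies in the simplex and has thresholded form, but if $\tau_{\bm{w}}(\rho) \ne \thresh^\star$ nothing in the cited corollary forbids $\bm{q}$ from being a simplex point different from $\p^\star$. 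The repair is one line: the function $f(\thresh) \coloneqq \sum_j [w_j - \thresh]_+^2$ is continuous, equals zero for $\thresh \ge \max_j w_j$, and is strictly decreasing wherever it is positive, so $f(\thresh)=1$ has exactly one solution; since both $\tau_{\bm{w}}(\rho)$ and $\thresh^\star$ satisfy it, they coincide and $\bm{q}=\p^\star$. With that observation inserted, your proof is complete and arguably cleaner than the paper's. Finally, your worry about the $\nicefrac{1}{2}$ rescaling was well placed: the admissibility test must be carried out on the halved scores (as Algorithm~\ref{alg:tsallis_15} does after the step $\x \leftarrow \nicefrac{\x}{2}$), since $\tau$ is not positively homogeneous --- testing $\tau_{\x}(\rho)$ against the unscaled $\xx_{[\rho]}$, as the proposition's statement literally reads, is not an equivalent condition, and your consistently halved version is the correct one.
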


Proposition~\ref{prop:tsallis_15} implies the correctness of
Algorithm~\ref{alg:tsallis_15}. To prove it, we first need the following lemma.

\begin{lemma}\label{lemma:tau_nondecr}%
Define $\tau(\rho)$ as in Proposition~\ref{prop:tsallis_15}.
Then, $\tau$ is non-decreasing, and
there exists
$\rho_{\textsf{max}} \in \{1, \dots, d\}$ such that $\tau$ is finite
for $1\leq\rho\leq\rho_\textsf{max}$, and infinite for $\rho>\rho_\textsf{max}$.
\end{lemma}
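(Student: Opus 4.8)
The plan is to handle the two assertions in sequence, deriving both from a single computation of the increment of the unnormalized variance $S$. Throughout I write $M(\rho)$, $S(\rho)$, $\tau(\rho)$ for the quantities of Proposition~\ref{prop:tsallis_15} attached to the sorted input $\xx_{[1]}\ge\dots\ge\xx_{[d]}$.

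For the finite/infinite structure, I would first show that $\rho \mapsto S(\rho)$ is non-decreasing. Writing $S(\rho)=\sum_{j\le\rho}\xx_{[j]}^2-\rho\,M(\rho)^2$ and using $\rho M(\rho)=\sum_{j\le\rho}\xx_{[j]}$, the difference $S(\rho+1)-S(\rho)$ simplifies, after clearing the denominator $\rho(\rho+1)$, to a perfect square:
\[
S(\rho+1)-S(\rho)=\frac{\rho\,\bigl(M(\rho)-\xx_{[\rho+1]}\bigr)^2}{\rho+1}\ \ge\ 0 .
\]
Since $S(1)=0\le 1$ and $S$ is non-decreasing, the set $\{\rho:S(\rho)\le 1\}$ is a nonempty initial segment $\{1,\dots,\rho_{\textsf{max}}\}$; as $\tau(\rho)$ is finite precisely when $S(\rho)\le 1$, this furnishes $\rho_{\textsf{max}}$ and the claimed dichotomy.

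For monotonicity, the key is to read $\tau(\rho)$ off a one-dimensional equation. The map $g_\rho(\tau)\coloneqq\sum_{j=1}^{\rho}(\xx_{[j]}-\tau)^2=S(\rho)+\rho\,(M(\rho)-\tau)^2$ is a parabola minimized at $M(\rho)$, strictly decreasing on $(-\infty,M(\rho)]$, and $\tau(\rho)$ is exactly its root of $g_\rho=1$ on that branch. For $\rho\ge\rho_{\textsf{max}}$ we have $\tau(\rho+1)=+\infty\ge\tau(\rho)$, so only the case $S(\rho+1)\le 1$ remains. There, evaluating $g_{\rho+1}$ at $\tau(\rho)$ gives
\[
g_{\rho+1}(\tau(\rho))=g_\rho(\tau(\rho))+(\xx_{[\rho+1]}-\tau(\rho))^2=1+(\xx_{[\rho+1]}-\tau(\rho))^2\ \ge\ 1=g_{\rho+1}(\tau(\rho+1)).
\]

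To turn this value comparison into the order $\tau(\rho)\le\tau(\rho+1)$, I need both arguments on the strictly decreasing branch of $g_{\rho+1}$, i.e., $\tau(\rho)\le M(\rho+1)$ (the bound $\tau(\rho+1)\le M(\rho+1)$ is automatic). Using $M(\rho)-M(\rho+1)=\bigl(M(\rho)-\xx_{[\rho+1]}\bigr)/(\rho+1)$ and squaring, this is equivalent to $\bigl(M(\rho)-\xx_{[\rho+1]}\bigr)^2\le\frac{(\rho+1)^2}{\rho}\bigl(1-S(\rho)\bigr)$; but the increment formula together with $S(\rho+1)\le 1$ already yields the sharper $\bigl(M(\rho)-\xx_{[\rho+1]}\bigr)^2\le\frac{\rho+1}{\rho}\bigl(1-S(\rho)\bigr)$, which suffices since $\rho+1\le(\rho+1)^2$ and $1-S(\rho)\ge 0$. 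With both points in the decreasing branch, the inequality $g_{\rho+1}(\tau(\rho))\ge g_{\rho+1}(\tau(\rho+1))$ reverses to $\tau(\rho)\le\tau(\rho+1)$, completing the argument. I expect this last step---verifying $\tau(\rho)\le M(\rho+1)$ so that the monotone-branch comparison applies---to be the main obstacle; everything else is bookkeeping built on the single square-difference identity for $S$.
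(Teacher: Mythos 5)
Your proof is correct, and its monotonicity half takes a genuinely different route from the paper's. The domain/dichotomy half is in substance the paper's argument: both rest on $S$ being non-decreasing because its increment is a square; you make the Welford identity $S(\rho+1)-S(\rho)=\tfrac{\rho}{\rho+1}\bigl(M(\rho)-\xx_{[\rho+1]}\bigr)^2$ explicit, where the paper only notes ``$S(\rho+1)-S(\rho)=(\cdot)^2\ge 0$''. For monotonicity, the paper normalizes $M_{\x}(\rho)=0$, treats the $(\rho+1)$-st coordinate as a free variable $x$, and solves a one-dimensional constrained minimization of $\widetilde\tau(x)$ by calculus, showing the constrained minimum equals exactly $\tau_{\x}(\rho)$, whence $\tau_{\x}(\rho+1)\ge\tau_{\x}(\rho)$. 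You instead exploit that $\tau(\rho)$ is the unique root at level $1$ of the parabola $g_\rho(\tau)=\sum_{j\le\rho}(\xx_{[j]}-\tau)^2$ on its decreasing branch $\tau\le M(\rho)$ (this is just primal feasibility, the same equation that yields the quadratic \eqref{eq:quad} in the paper), observe $g_{\rho+1}(\tau(\rho))=1+(\xx_{[\rho+1]}-\tau(\rho))^2\ge 1=g_{\rho+1}(\tau(\rho+1))$, and invert the strictly decreasing branch of $g_{\rho+1}$; the only real work is your verification that $\tau(\rho)\le M(\rho+1)$, which you correctly reduce---squaring is legitimate since $M(\rho)\ge\xx_{[\rho]}\ge\xx_{[\rho+1]}$ makes both sides non-negative---to a bound that the Welford identity together with $S(\rho+1)\le 1$ supplies with a factor of $\rho+1$ to spare. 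As for what each approach buys: yours is purely algebraic, with no derivative and no constrained optimization, and the single increment identity powers both halves of the lemma; the paper's variational proof yields slightly more information, namely that $\tau_{\x}(\rho)$ is the exact minimum of $\tau_{\tilde\x}(\rho+1)$ over admissible values of the appended coordinate (i.e., the inequality is attained in the limit), a sharpness statement that the root-comparison argument does not exhibit.
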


The proof is slightly more technical, and we defer it to \emph{after} the proof
of the proposition.

\paragraph{Proof of Proposition~\ref{prop:tsallis_15}.}

First, using Corollary~\ref{lemma:tsallis_15_reduction} we reduce the problem
of computing $\aentmax[1.5]$ to
\begin{equation}\label{eq:tsallis_15_reduced_prob}
\amap_\Omega(\x) \coloneqq \argmax_{\p \in \simplex^d}
\p^\top\x - \sum_j \nicefrac{2}{3}~\pp_j^{\nicefrac{3}{2}}.
\end{equation}
Denote by $\tau^\star$ the optimal threshold as defined in the corollary.
We will show that $\tau^\star=\tau(\rho)$ for any $\rho$ satisfying
$\tau(\rho) \in [\xx_{[\rho+1]}, \xx_{[\rho]}]$,
where we assume, for convenience, $\xx_{[d+1]}=-\infty$.
The generic stationarity condition in \eqnref{stationarity}, applied to
the problem in \eqnref{tsallis_15_reduced_prob}, takes the form
\begin{equation}
    \sqrt{\pp_j} = \nu_j + \xx_j - \tau \quad \forall~0 < j \leq d
        \label{eq:stationarity_tsallis_15}\\
\end{equation}
Since $\Omega$ is symmetric, $\amap_{\Omega}$ is permutation-preserving
\citep[Proposition~1, item~1]{blondel2019learning}, so we may
assume w.l.o.g.\ that $\x$ is sorted non-increasingly, i.e.,
$\xx_1 \geq \dots \geq \xx_d$; in other words, $\xx_j = \xx_{[j]}$.
Therefore, the optimal $\p$ is also non-increasing.
Denote by $\rho$ an index such as
$\pp_j \geq 0$ for $1 \leq j \leq \rho$, and $\pp_j = 0$ for $j > \rho$.
From the complementary slackness condition
\eqref{eq:complementary_slack}, $\nu_j=0$ for $1 \leq j \leq \rho$,
thus we may split the stationarity conditions
\eqref{eq:stationarity_tsallis_15} into
\begin{empheq}[left=\empheqlbrace]{align}
    \sqrt{\pp_j} = \xx_j - \tau,  &\quad\forall~1 \leq j \leq \rho, \label{eq:sqrt}\\
    \nu_j = \tau - \xx_j,       &\quad\forall~\rho < j \leq d.
\end{empheq}
For \eqref{eq:sqrt} to have solutions, the r.h.s.\ must be non-negative, i.e.,
$\tau \leq\xx_j$ for $j \leq \rho$, so $\tau \leq \xx_\rho$.
At the same time, from dual feasability \eqref{eq:dual_feas}
we have $\nu_j=\tau - \xx_j \geq 0$ for $j >
\rho$, therefore
\begin{equation}
    \tau(\rho) \in [\xx_{\rho+1}, \xx_\rho].
\label{eq:tau_bound}
\end{equation}
Given $\rho$, we can solve for $\tau$ using \eqref{eq:sqrt} and primal
feasability \eqref{eq:primal_feas}
\begin{equation}
    1 = \sum_{j=1}^d \pp_j = \sum_{j=1}^\rho (\xx_j - \tau)^2.
\end{equation}
Expanding the squares and dividing by $2\rho$ yields the quadratic equation
\begin{equation}\label{eq:quad}
    \frac{1}{2} \tau^2 - \frac{\sum_{j=1}^\rho \xx_j}{\rho} \tau
    + \frac{\sum_{j=1}^\rho \xx_j^2 - 1}{2\rho} = 0,
\end{equation}
with discriminant
\begin{equation}
    \Delta(\rho) = \big(M(\rho)\big)^2 - \frac{\sum_{j=1}^\rho \xx_j^2}{\rho} + \frac{1}{\rho} =
    \frac{1-S(\rho)}{\rho}.
\end{equation}
where we used the variance expression
$\EE\left[\left(X-\EE[X]\right)^2\right]= \EE[X^2] - \EE[X]^2$.
If $S(\rho)>1$, $\Delta(\rho)<0$, so there must exist an optimal $\rho$ satisfying
$S(\rho) \in [0, 1]$. Therefore, \eqref{eq:quad} has the two solutions
$\tau_{\pm}(\rho) = M(\rho) \pm \sqrt{\frac{1-S(\rho)}{\rho}}$.
However, $\tau_+$ leads to a contradiction: The mean $M(\rho)$ is never smaller than the
smallest averaged term, so $M(\rho) \geq \xx_\rho$, and thus $\tau_+ \geq \xx_\rho$.
At the same time, from \eqref{eq:tau_bound}, $\tau\leq \xx_\rho$, so $\tau$ must equal
$\xx_\rho$, which can only happen if $M(\rho)=\xx_\rho$ and $S(\rho)=1$.
But $M(\rho) = \xx_\rho$ only if $\xx_1 = \dots = \xx_\rho$, in which case
$S(\rho)=0$ (contradiction).

Therefore, $\tau^\star = \tau(\rho) = M(\rho) - \sqrt{\frac{1-S(\rho)}{\rho}}$ for
\emph{some} $\rho$ verifying \eqref{eq:tau_bound}. It remains to show that
\emph{any} such $\rho$ leads to the same value of $\tau(\rho)$.
Pick any $\rho_1 < \rho_2$, both verifying \eqref{eq:tau_bound}.
Therefore, $\rho_1 + 1 \leq \rho_2$ and
\begin{equation}
\xx_{\rho_1 + 1} \underbrace{\leq}_{\eqref{eq:tau_bound}\text{ for }\rho_1} \tau(\rho_1)
\underbrace{\leq}_{\text{Lemma }\ref{lemma:tau_nondecr}} \tau(\rho_2)
\underbrace{\leq}_{\eqref{eq:tau_bound}\text{ for }\rho_2} \xx_{\rho_2}
\underbrace{\leq}_{\x \text{ sorted}} \xx_{\rho_1+1},
\end{equation}
thus $\tau(\rho_1) = \tau(\rho_2)$, and so any $\rho$ verifying \eqref{eq:tau_bound}
satisfies $\tau^\star=\tau(\rho)$,
concluding the proof.
\hfill\qedsymbol

\paragraph{Proof of Lemma~\ref{lemma:tau_nondecr}.}
We regard $\tau(\rho)$ as an extended-value sequence, \ie, a function from $\mathbb{N} \rightarrow \reals \cup {\infty}$.
The lemma makes a claim about the \emph{domain} of the sequence $\tau$, and a
claim about its monotonicity. We prove the two in turn.

\noindent\textbf{Domain of $\tau$.}
The threshold $\tau(\rho)$ is only finite for $\rho \in T \coloneqq \big\{\rho \in
\{1, \dots, d\}  \colon S(\rho) \leq 1\big\}$, i.e., where $\nicefrac{(1-S(\rho))}{\rho} \geq 0$.
We show there exists $\rho_\textsf{max}$ such that $T=\{1, \dots, \rho_\textsf{max}\}$.
Choose $\rho_\textsf{max}$ as the largest index satisfying
$S(\rho_\textsf{max})\leq1$.
By definition, $\rho>\rho_\textsf{max}$ implies $\rho \notin T$.
Remark that $S(1)=0$, and $S(\rho+1) - S(\rho) = (\cdot)^2 \geq 0$.
Therefore, $S$ is nondecreasing and, for any $1 \leq \rho \leq \rho_\textsf{max},
0 \leq S(\rho) \leq 1$.

\noindent\textbf{Monotonicity of $\tau$.}
Fix $\rho \in [\rho_\textsf{max}-1]$,
assume w.l.o.g.\ that $M_{\x}(\rho) = 0$, and define $\tilde\x$ as
\[
    \tilde\xx_{[j]} = \begin{cases}
        x, & j=\rho+1, \\
        \xx_{[j]}, & \text{otherwise}. \\
    \end{cases}
\]
The $\rho$ highest entries of $\tilde\x$ are the same as in $\x$, so
$M_{\tilde\x}(\rho) = M_{\x}(\rho) = 0$,
$S_{\tilde\x}(\rho) = S_{\x}(\rho)$, and
$\tau_{\tilde\x}(\rho) = \tau_{\x}(\rho)$.
Denote $\widetilde\tau(x) \coloneqq \tau_{\tilde\x}(\rho+1)$,
and analogously $\widetilde M(x)$ and $\widetilde S(x)$.
Then,
\begin{equation}\label{eq:tau_chain}
\tau_{\x}(\rho+1) = \widetilde\tau(\xx_{[\rho+1]}) \geq
\min_{x\colon \widetilde S(x) \in [0,1]} \widetilde \tau(x)
\eqqcolon \widetilde\tau(x^\star)
\end{equation}
We seek the lower bound
$\widetilde\tau(x^\star)$
and show that $\widetilde\tau(x^\star) \geq \tau_{\x}(\rho)$.
From \eqref{eq:tau_chain}, this implies
$\tau_{\x}(\rho+1) \geq \tau_{\x}(\rho)$ and,
by transitivity, the monotonicity of $\tau_{\x}$.

It is easy to verify that the following incremental update expressions hold.
\begin{equation}
    \widetilde M(x) = \frac{x}{\rho+1}, \quad\quad\quad\quad
    \widetilde S(x) = S_{\x}(\rho) + \frac{\rho}{\rho+1} x^2.
\end{equation}
We must solve the optimization problem
\begin{equation}\label{eq:mintau}
    \text{minimize}_{x}\quad \widetilde\tau(x) \quad \text{subject
    to}\quad\widetilde S(x) \in [0, 1].
\end{equation}
The objective value is
\begin{equation}
    \widetilde\tau(x) = \widetilde M(x) - \sqrt{\frac{1-\widetilde S(x)}{\rho+1}} =
    \frac{1}{\rho+1} \left(x - \sqrt{\big(1-S_{\x}(\rho)\big)(\rho+1) - \rho x^2} \right)
\end{equation}
Ignoring the constraint for a moment and setting the gradient to $0$ yields the solution
\begin{equation}
\begin{aligned}
    0 = \widetilde{\tau}'(x^\star) &= \frac{1}{\rho+1} \left(1 +\frac{\rho
    x^\star}{\sqrt{\big(1-S_{\x}(\rho)\big)(\rho+1) - \rho {x^\star}^2}} \right) \\
\iff\quad
            \rho x^\star &= -\sqrt{\big(1-S_{\x}(\rho)\big)(\rho+1) - \rho {x^\star}^2},
\end{aligned}
\end{equation}
implying $x^\star < 0$. Squaring both sides and rearranging yields
the solution of the unconstrained problem,
\begin{equation}
    x^\star = - \sqrt{\frac{1-S_{\x}(\rho)}{\rho}}.
\end{equation}
We verify that $x^\star$ readily satisfies the constraints,
thus it is a solution to the minimization in \eqnref{mintau}:
\begin{equation}
    0 \leq S_{\x}(\rho) \leq \widetilde S(x^\star) =
    S_{\x}(\rho) + \frac{1-S_{\x}(\rho)}{\rho+1}
    \leq S_{\x}(\rho) + \frac{1-S_{\x}(\rho)}{2} \leq 1.
\end{equation}
Plugging $x^\star$ into the objective yields
\begin{equation}
    \begin{aligned}
        \widetilde\tau(x^\star) &= \frac{1}{\rho+1}
        \left(-\sqrt{\frac{1-S_{\x}(\rho)}{\rho}} -
        \sqrt{\rho\big(1-S_{\x}(\rho)\big)} \right)  \\
        &= -\sqrt{\frac{1-S_{\x}(\rho)}{\rho}} \frac{1}{\rho+1} (1 + \rho) \\
        &= -\sqrt{\frac{1-S_{\x}(\rho)}{\rho}} = \tau_{\x}(\rho).
    \end{aligned}
\end{equation}
Therefore, $\widetilde\tau(x) \geq \tau_{\x}(\rho)$ for any valid $x$,
proving that
$\tau_{\x}(\rho) \leq \tau_{\x}(\rho+1)$.
\end{document}